\theoremstyle{plain}
\newtheorem{thm}{\protect\theoremname}
\theoremstyle{plain}
\newtheorem{lem}{\protect\lemmaname}
\theoremstyle{plain}
\newtheorem{corollary}{\protect\corollaryname}
\theoremstyle{remark}
\theoremstyle{plain}
\theoremstyle{remark}
\newtheorem*{rem*}{\protect\remarkname}
\theoremstyle{plain}
\newtheorem*{assum*}{\protect\assumptionname}
\theoremstyle{plain}
\newtheorem{defn}{\protect\definitionname}
\providecommand{\lemmaname}{Lemma}
\providecommand{\propositionname}{Proposition}
\providecommand{\remarkname}{Remark}
\providecommand{\theoremname}{Theorem}
\providecommand{\assumptionname}{Assumption}
\providecommand{\definitionname}{Definition}
\providecommand{\corollaryname}{Corollary}
\title{Matrix Estimation for Offline Reinforcement Learning with Low-Rank Structure}
\author[1]{Xumei Xi}
\author[1]{Christina Lee Yu}
\author[2]{Yudong Chen}
\affil[1]{School of Operations Research and Information Engineering, 
Cornell University}
\affil[2]{Department of Computer Sciences, University of Wisconsin-Madison}
\date{\texttt{ \{xx269, cleeyu\}@cornell.edu, yudong.chen@wisc.edu }}
\begin{document}

\global\long\def\norm#1{\left\Vert #1\right\Vert }%
\global\long\def\card#1{\left|#1\right|}%
\global\long\def\R{\mathbb{R}}%
\global\long\def\opnorm#1{\norm{#1}_{\mathrm{op}}}%
\global\long\def\infnorm#1{\norm{#1}_{\infty}}%
\global\long\def\cutnorm#1{\norm{#1}_{\square}}%
\global\long\def\twonorm#1{\norm{#1}_{2}}%
\global\long\def\nunorm#1{\norm{#1}_{\ast}}%
\global\long\def\fnorm#1{\norm{#1}_{F}}%
\global\long\def\onenorm#1{\left\Vert #1\right\Vert _{1}}%
\global\long\def\maxnorm#1{\left\Vert #1\right\Vert _{\text{max}}}%
\global\long\def\ip#1{\left\langle #1\right\rangle }%
\global\long\def\trace{\operatorname{Tr}}
\global\long\def\argmax{\operatorname{argmax}}%
\global\long\def\poly{\operatorname{poly}}%
\global\long\def\pihat{\widehat{\pi}}%
\global\long\def\pistar{\pi^{\ast}}%
\global\long\def\subopt{\operatorname{SubOpt}}%

\global\long\def\statespace{\mathcal{S}}%
\global\long\def\statesize{S}%
\global\long\def\actionspace{\mathcal{A}}%
\global\long\def\actionsize{A}%
\global\long\def\rewardbound{R}%
\global\long\def\horizon{H}%
\global\long\def\discount{\gamma}%
\global\long\def\E{\mathbb{E}}%
\global\long\def\indic{\mathds{1}}%
\global\long\def\dataset{\mathcal{D}}%
\global\long\def\P{\mathbb{P}}%
\global\long\def\supp{\operatorname{supp}}%
\global\long\def\transition{P}%
\global\long\def\target{\pi^{\theta}}%
\global\long\def\behavior{\pi^{\beta}}%
\global\long\def\one{\boldsymbol{1}}%
\global\long\def\mmu{\boldsymbol{\mu}}%
\global\long\def\pphi{\boldsymbol{\phi}}%
\global\long\def\rank{\operatorname{rank}}%
\global\long\def\diag{\operatorname{diag}}%
\global\long\def\ME{\mathtt{ME}}%
\global\long\def\argmin{\operatorname{argmin}}%
\global\long\def\sgn{\operatorname{sgn}}%
\global\long\def\pmin{p_{\min}}%
\global\long\def\pthr{p_{\mathrm{thr}}}%
\global\long\def\cZ{\mathcal{Z}}%
\global\long\def\normal{\mathcal{N}}%
\global\long\def\dis{\operatorname{Dis}}
\global\long\def\dishat{\operatorname{\widehat{Dis}}}
\global\long\def\polylog{\operatorname{polylog}}
\global\long\def\Diff{\operatorname{Diff}}
\global\long\def\B{\bm{B}}

\maketitle

\begin{abstract}
      We consider offline Reinforcement Learning (RL), where the agent does not interact with the environment and must rely on offline data collected using a behavior policy.
      Previous works provide policy evaluation guarantees when the target policy to be evaluated is covered by the behavior policy, that is, state-action pairs visited by the target policy must also be visited by the behavior policy.
      We show that when the MDP has a latent low-rank structure, this coverage condition can be relaxed. Building on the connection to weighted matrix completion with non-uniform observations, we propose an offline policy evaluation algorithm that leverages the low-rank structure to estimate the values of uncovered state-action pairs. Our algorithm does not require a known feature representation, and our finite-sample error bound involves a novel discrepancy measure quantifying the discrepancy between the behavior and target policies in the spectral space. We provide concrete examples where our algorithm achieves accurate estimation while existing coverage conditions are not satisfied. Building on the above evaluation algorithm, we further design an offline policy optimization algorithm and provide non-asymptotic performance guarantees. 
\end{abstract}

\section{Introduction}
\label{sec:intro}
    
    Reinforcement Learning (RL) has achieved significant empirical success in the online setting, where the agent continuously interacts with the environment to collect data and improve its performance.
    However, online exploration is costly and risky in many applications, such as healthcare~\cite{gottesman2019medicine} and autonomous driving~\cite{shalev2016autonomous}, in which case it is preferable to learn from a pre-collected observational dataset from doctors or human drivers using their own policies.
    Due to lack of on-policy interaction with the environment, offline RL faces the fundamental challenge of distribution shift~\cite{levine2020survey}.
    A standard approach for handling distribution shift is importance sampling~\cite{precup2000importance, precup2001importance}. More sophisticated approaches have been proposed to alleviate the high variance of importance sampling~\cite{farajtabar2018robust, wang2017cb}. Recent works~\cite{sutton2016td, nachum2019dualdice, zhang2020gendice} consider estimating the state marginal importance ratio, a more tractable problem.
    
    Existing work on offline RL requires the dataset to have sufficient coverage. A standard measure for coverage is the concentrability coefficient~\cite{uehara2021partial}: $C^\pi = \max_{s,a} \frac{d^\pi(s,a)}{\rho(s,a)}$, which is the ratio between the state-action occupancy measure of a policy $\pi$ of interest and the (empirical) occupancy measure $\rho$ of the behavior policy generating the offline dataset. However, this can be restrictive as the support of $\rho $ must contain that of $d^\pi$ in order for $C^\pi$ to be finite. 
    Earlier work such as the Fitted Q-iteration (FQI) algorithm~\cite{munos2008fqi} requires full coverage, i.e.~$C^\pi < \infty$ for all policies $\pi$. 
    More recent works~\cite{uehara2021partial, rashidinejad2021pessimism, liu2020batch} requires a more relaxed, partial coverage condition $C^{\pi^\ast} < \infty$ with $\pi^\ast$ being optimal policy. Partial coverage is still a fairly strong requirement: the behavior policy must visit every state the optimal policy would visit, and take every action the optimal policy would take.

    In this paper, we seek to relax the coverage condition for offline policy evaluation in settings where the Markov decision process (MDP) has a latent low-rank structure.
    Similarly to \cite{shah2020matrix, sam2023overcoming}, we view the $Q$ function as a matrix and exploit its low-rank structure to infer the entries that were not observed in the offline data. Unlike typical results from the low-rank matrix completion literature, our setting requires completing the matrix under non-uniform sampling, as in \cite{foucart2021weighted, lee2013mc}; moreover, the error is evaluated under a different distribution or weighted norm, leading to the fundamental challenge of distribution shift. By leveraging techniques from weighted and non-uniform matrix completion, we develop a new offline policy evaluation algorithm, which alternates between Q iteration and matrix estimation. For both the infinite and finite sample settings, we show that the evaluation error can be bounded in terms of a novel discrepancy measure between the behavior and target policies. In contrast to the standard concentrability coefficient, our discrepancy measure may remain finite even when the behavior policy does not cover the support of the target policy. We present a concrete example where the concentrability coefficient is infinite but our method achieves a meaningful error bound. Building on the above evaluation algorithm, we further design an offline policy optimization algorithm with provable performance guarantees.

    \xxedit{There are several challenges that arise when we borrow ideas from low-rank matrix estimation to offline RL. 
    Classical matrix estimation results require two assumptions that are hard to satisfy in MDP. 
    First, independence is assumed between the sampling process and the observation noise. This is clearly not true for MDPs, where observation noise is intertwined with the sampling. For example, if a state-action pair is sampled more frequently, the empirical observations~(e.g., transition frequencies) are bound to be less noisy than others sampled less often. 
    Second, sampling in matrix estimation typically requires each entry to have a non-zero probability of being observed. Sampling in MDPs is very different: only entries on the support of the sampling distribution, which is determined by the behavior policy, can be sampled; those off the support have a zero observation probability.
    We note that various recent works attempt to relax the aforementioned assumptions to make matrix estimation more applicable to real-world sequential decision-making problems. 
    For example, the paper~\cite{agarwal2021causal} allows for some dependence between the noise and sampling pattern, and the smallest sampling probability can be zero. Their algorithm, which involves finding a maximum biclique, works best with datasets with a specific restrictive structure, which is often not present in offline RL. Our goal in this paper is to derive a performance guarantee for a more general class of sampling patterns. }

\section{Problem Setup}
\label{sec:setup}

    \subsection{MDP with Low-Rank Structure}

        Consider an MDP $\mathcal{M}=(\statespace,\actionspace,H,\transition,r,\mu_{1}) $
        with finite state space $\statespace$, finite action space $\actionspace$,
        horizon $H$, transition kernel $\transition=\{ P_{t}\} _{t\in[H]}$, bounded reward function $r=\left\{ r_{t}:\statespace\times\actionspace \to [0,1]\right\} _{t\in[H]}$, and initial
        state distribution $\mu_{1}\in\Delta(\statespace)$. 
        Let $\statesize=|\statespace|$ and $\actionsize=|\actionspace|.$
        For each policy $\pi=\{ \pi_{t}: \statespace \to \Delta(\actionspace) \} _{t\in[H]}$, the Q function $Q_{t}^{\pi}:\statespace\times\actionspace\to\R$ is
        defined as $Q_{t}^{\pi}(s,a)=\E_\pi [ \sum_{i=t}^{H}r_{i}(s_{i},a_{i}) \vert s_{t}=s,a_{t}=a ],$
        and the total expected reward is
        $J^{\pi}=\E_\pi [\sum_{t=1}^{H}r_t(s_{t},a_{t})\vert s_{1}\sim\mu_{1} ].$
        Let $d_{t}^{\pi}:\statespace\times\actionspace\to[0,1]$
        denote the state-action occupancy measure at time $t\in[H]$ under
        policy $\pi$. 
        
        Given a dataset generated by the behavior policy $\behavior$, our goal is to estimate $J^{\target}$ for a target policy~$\target$. Our blanket assumption is that the MDP has the following low-rank structure, which implies that for any policy $\pi$, its $Q$ function (viewed as an $S$-by-$A$ matrix) is at most rank $d$. 
        \begin{assum*}
        \label{low_rank_assumption}
            For all $t$, $r_t \in [0,1] ^ {\statesize \times \actionsize}$ has rank at most $d' = \lfloor d/2 \rfloor$, and $P_t$ admits the decomposition 
            \[
                \transition_t(s'\vert s,a) = \textstyle \sum_{i=1}^{d'} u_{t, i}(s',s) w_{t, i}(a)
                \quad\text{or}\quad
                \transition_t(s'\vert s,a) = \textstyle \sum_{i=1}^{d'} u_{t, i}(s) w_{t, i}(s',a), 
                \quad \forall s',s,a.
            \]
        \end{assum*}
        The above low-rank model is different from the Low-rank MDP model considered in previous works~\cite{agarwal2020flambe, uehara2021representation}. In Low-rank MDPs, the transition kernel $P$ is assumed to have a factorization of the form $P(s' \vert s,a) = \sum_{i=1}^{d'} u_i(s')  w_i(s,a)$, where the factors $u_i(\cdot)$ and $w_i(\cdot,\cdot)$ are unknown. Closely related is the Linear MDP model~\cite{jiang2017contextual, yang2020reinforcement}, where the feature maps $w_i(\cdot,\cdot)$ are known. 
        In these models, the low-rank/linear structures are with respect to the relationship between the originating state-action pair $(s,a)$ and the destination state  $s'$; they do \emph{not} imply that $Q$ function is low-rank when viewed as a matrix. In contrast, our model stipulates that the transition kernel can be factorized either between (i) $a$ and $(s,s')$ or (ii) $s$ and $(s',a)$, both of which imply a low
        dimensional relationship between the current state $s$ and the action $a$ taken at that state, resulting in a low-rank $Q$ function. 
        \xxedit{A key consequence of the $Q$ function being low-rank is that we can bypass modeling the environment and directly estimate the $Q$ function by leveraging its low-rankness, resulting in a model-free method. 
        On the other hand, most works in Low-rank MDPs consider model-based methods. Note that when the transition tensor is fully factorized: $P_t(s' \vert s,a) = \sum_{i=1}^{d'} u_{t,i}(s') v_{t,i}(s) w_{t,i}(a)$, it satisfies both our assumption and the assumption of Low-rank MDPs. }

    \subsection{Offline Dataset}
    \label{subsub:offline_dataset}
    
        \xxedit{The offline dataset is denoted by $\dataset=\{ (s_{t}^{k},a_{t}^{k},r_{t}^{k})\} _{t\in[H], k\in[K]}$, which contains $K$ independent trajectories generated from the behavior policy $\behavior$. We consider two settings: the infinite-sample setting with $K \to \infty$ and the finite-sample setting with $K < \infty$; we describe these two settings in detail below. For simplicity, we assume the immediate rewards are observed without noise, which can be easily relaxed. The uncertainty in the system completely comes from the transition probability. 
        
        In the infinite-sample setting, we have partial but noiseless knowledge of the MDP on the support of the state-action occupancy measure induced by the behavior policy. In other words, for all state-action pairs that can be visited using the behavior policy, namely, $(s,a) \in \supp(d_t^{\behavior})$, we know the exact values of the transition probability $P_t(\cdot \vert s,a)$. 
        It is important to note that even in this idealized setting, off-policy evaluation is still non-trivial. When the behavior policy does not have full coverage, i.e., $\supp(d_t^{\behavior}) \neq \statespace \times \actionspace$, we do not have any information for the state-action pairs off the support and they must be estimated by leveraging the low-rank structure of the $Q$ function. 
        The distribution shift that arises in the infinite-sample setting can be attributed to the difference in support, which is precisely reflected in our proposed distribution discrepancy in Definition \ref{defn:op_dis} and the corresponding error bound in Theorem~\ref{thm:dt_bound}. 

        In the finite-sample setting, we have a noisy and unbiased estimate $\widehat P_t(\cdot \vert s,a)$ of the true transition probability $P_t(\cdot \vert s,a)$ for $(s,a) \in \supp(\widehat{d}_t^{\behavior})$, where $\widehat{d}_t^{\behavior}$ denotes the empirical data distribution of $K$ independent samples from the true distribution $d_t^{\behavior}$. 
        Since different estimates of the probability exhibit different levels of uncertainty, only considering the support is no longer sufficient. In particular, the finite-sample distribution shift depends not only on the difference in support, but also the difference in the specific distributions, which will be reflected in our proposed distribution discrepancy in Definition \ref{eq:defn_finite-sample_discrepancy} and the subsequent error bound in Theorem~\ref{thm:dt_bound_sample}. 
        }


    \subsection{Notation and Operator Discrepancy}
        For a matrix $M \in \R^{n\times m}$, let $\nunorm M$ denote its nuclear norm (sum of singular values), $\opnorm M$ its operator norm (maximum singular value), $\norm{M}_\infty=\max_{i,j}\card{M_{ij}}$ its entrywise $\ell_\infty$ norm, and $\supp(M)=\{(i,j): M_{ij} \neq 0\}$ its support. \xxedit{The max norm~\cite{srebro2005rank} of $M$ is defined as $ \norm{M}_{\max} = \min_{U,V: X = UV^\top} \norm{U}_{2 \to \infty } \norm{V}_{2 \to \infty }$,
        where $\norm{\cdot}_{2 \to \infty }$ denotes the maximum row $\ell_2$ norm of a matrix. 
        \ycedit{Both max norm and nuclear norm can be viewed as convex surrogates of the matrix rank~\cite{srebro2005rank}.}
        For a rank-$d$ matrix $M$, its max norm can be upper bounded as
        \begin{align}
        \label{eq:maxnorm_infty_bound}
            \norm{M}_{\max} \le \sqrt{d} \norm{M}_{\infty}.
        \end{align}
        The nuclear norm and the max norm satisfy: 
        \begin{align}
        \label{eq:maxnorm_nuclearnorm}
            \frac{1}{\sqrt{nm}} \nunorm{M} \le \norm{M}_{\max} \le \nunorm{M}. 
        \end{align}}%
        The indicator matrix $\indic_M \in \{0,1\}^{n\times m}$ is a binary matrix encoding the position of the support of $M$.
        The entrywise product between two matrices $M$ and $M'$ is denoted by $M \circ M'$. 
        
        We propose a novel discrepancy measure defined below, 
        and show that it can replace the role of the concentrability coefficient in our infinite-sample error bound under the low-rank assumption.
        
        \begin{defn}[Operator discrepancy]
        \label{defn:op_dis}
           The operator discrepancy between two probability distributions $p,q \in \Delta(\statespace \times \actionspace)$ is defined as
            \begin{align} 
            \label{eq:defn_dis_pq}
                \begin{split}
                \dis(p, q) \coloneqq \min & \Big\{ \opnorm{g-q} : g \in \Delta (\statesize \times \actionsize), \; \supp(g) \subseteq \supp(p) \Big\}.
                \end{split}
            \end{align}
        \end{defn}
        
        Note that $\dis(p, q) \le \opnorm{p-q}$ is always finite, and $\dis(p, q)=0$ if and only if $\supp(q) \subseteq \supp(p)$.
        To provide intuition for $\dis(p, q)$, let the minimizer in~\eqref{eq:defn_dis_pq} be $g^\ast$. By generalized H\"older's inequality,
        \begin{equation} 
        \label{eq:operational_interpretation}
            \Big| \E_{(s,a)\sim g^\ast} \big[M(s,a)\big] - \E_{(s,a)\sim q} \big[M(s,a)\big]  \Big|
            = \Big| \langle g^\ast, M \rangle - \langle q, M \rangle \Big|
            \le \dis(p,q) \cdot \| M \|_*.
        \end{equation}
        If the nonzero singular values of $M$ are of the same scale, then the RHS of~(\ref{eq:operational_interpretation}) is of order $ \dis(p, q) \cdot \textnormal{rank}(M)$. 
        Therefore, $\dis(p, q)$ measures the distribution shift between $p$ and $q$ in terms of preserving the expectation of low-rank matrices. \xxedit{Compared to traditional measures such as the concentrability coefficient, the operator discrepancy takes into account the low-rank structure of the model and therefore can allow for a less restrictive coverage condition. } Note that $\dis(p, q)$ only depends on the support of $p$: if $\supp(p)=\supp(p')$, then $\dis(p, q) = \dis(p', q)$ for all $q$. \xxedit{As mentioned before, the infinite-sample distribution shift depends only on the support of the behavior policy and the operator discrepancy reflects exactly that. }Moreover, thanks to the minimization in the definition~(\ref{eq:defn_dis_pq}), $\dis(p, q)$ can be significantly smaller than $\opnorm{p-q}$. For instance, if $p$ is the uniform distribution on $\statespace\times\actionspace$, then $g^\ast=q$ and hence $\dis(p, q)=0$ for all $q$.
    
        \xxedit{For our finite-sample error bound, we consider a different notion of discrepancy. \cyedit{As we have a limited number of samples from the behavior policy, it is natural that the appropriate notion of discrepancy no longer depends only on the support of the distribution, but rather how closely the distributions match. In our analysis, the appropriate measure of closeness is given by the operator norm difference, which is also closely tied to Definition~\ref{defn:op_dis} when we restrict $g$ to be equal to $p$.}
        \begin{defn}[Empirical Operator Discrepancy]
        \label{eq:defn_finite-sample_discrepancy}
        The empirical operator discrepancy between two probability distributions $p,q \in \Delta(\statespace \times \actionspace)$ is defined as
            \begin{align} 
            \label{eq:defn_finite-sample_dis}
                \begin{split}
                \dishat(p,q) \coloneqq \opnorm{p-q}.
                \end{split}
            \end{align}
        \end{defn}
        }
    
        \xxedit{When infinite samples are given from the behavior policy, the error bound for our proposed off-policy evaluation algorithm will be a function of $\dis(d_t^{\behavior}, d_t^{\target})$.} The operator discrepancy only depends on the support of the behavior policy and not the exact distribution, which is expected under the infinite-sample setting. As such, the operator discrepancy highlights the inherent error induced by distribution shift.
        \xxedit{In the finite-sample setting, our error bound depends on the empirical operator discrepancy $\dishat({d}_t^{\behavior}, d_t^{\target})$, for which the exact distribution matters. This is expected since we are given observations with varying noise levels determined by the empirical data distribution.} 
        
        \ycedit{We remark in passing that the inequality $\dis(d_t^{\behavior}, d_t^{\target}) \le \dishat({d}_t^{\behavior}, d_t^{\target})$ holds by definition. Also, the above discrepancy metrics share similarity with the parameter $\Lambda$ in~\cite{lee2013mc}, which also measures the difference in two distributions in the operator norm. }


\section{Algorithm}
\label{sec:algo}

    In this section, we present our algorithm for offline policy evaluation. The algorithm takes as input an offline dataset $\dataset=\{ (s_{t}^{k},a_{t}^{k},r_{t}^{k})\} _{t\in[H], k\in[K]}$, which contains $K$ independent trajectories generated from the behavior policy $\behavior$. \cyedit{The algorithm also takes as input the target policy $\target$, the initial state distribution $\mu_1$, weight matrices $(\rho_t)_{t\in[H]}$, and a matrix estimation algorithm $\ME(\cdot)$ which will be specified in \eqref{eq:nuclear_norm_min_infinite} and \eqref{eq:max_norm} for the infinite-sample and finite-sample settings, respectively. The weight matrices $\{\rho_t\}$ are chosen by the user and primarily used as an input to $\ME(\cdot)$.}
    \ycedit{As a typical choice, in the infinite-sample setting we set $\rho_t$ to be the true state-action occupancy measure $d_t^{\behavior}$ of the behavior policy; in the finite-sample setting we set $\rho_t$ to be the empirical measure $\widehat{d}_t^{\behavior}$. }


    Our algorithm \cyedit{iterates backwards in the horizon from steps $t = H$ to $t=1$. For each step $t$, the algorithm has two parts. First, it} applies Q-value iteration to empirically estimate the Q-values for state-action pairs in the support of $d_t^{\behavior}$. In particular, the data is used to construct unbiased empirical estimates of the transition kernel and occupancy measure of the behavior policy, denoted by $\widehat{P}_t$ and $\widehat{d}_t^{\behavior}$, respectively. Let $\widehat{B}_t^{\target}$ denote the target policy's empirical Bellman operator, which is given by 
    \begin{align}
    \label{eq:bellman}
        (\widehat{B}_{t}^{\target} f )(s,a)
        = r_{t}(s,a) + {\textstyle\sum_{s',a'}} \widehat{P}_{t}(s'\vert s,a)\target_{t}(a'\vert s')f(s',a')
    \end{align}
    for all $f: \statespace \times \actionspace \to \R$. Note that we can evaluate $(\widehat{B}_{t}^{\pi} f )(s,a)$ only over $(s,a)\in\supp(\widehat{d}_t^{\behavior})$. \cyedit{With the given weight matrix $\rho_t$, which is chosen such that $\supp(\rho_t) \subseteq \supp(\widehat{d}_t^{\behavior})$, the in-support empirical estimate of the Q-value is computed via
    \[Z_{t}(s,a) \gets ( \widehat{B}_{t}^{\target}\widehat{Q}_{t+1}^{\target} ) (s, a), \quad \text{for }(s,a)\in \supp (\rho_t).\]}
    
    Subsequently, to infer the Q values off support, the algorithm uses the low-rank matrix estimation subroutine, $\ME (\cdot)$, \cyedit{which takes as input the weight matrix $\rho_t$ and the empirical estimates $Z_t$}. While $\ME (\cdot)$ \ycedit{can be any off-the-shelf matrix estimation algorithm, for the purpose of the analysis, we will use the max norm minimization method due to its computational tractability and robustness under nonuniform sampling. Specifically, our $\ME (\cdot)$ subroutines are specified in \eqref{eq:nuclear_norm_min_infinite} and \eqref{eq:max_norm} in the next section, in which different constraints are used for the infinite-sample and finite-sample settings}. 

    The pseudocode for our algorithm is given below. Our algorithm is computationally efficient and easy to implement.

    \RestyleAlgo{ruled}
    \begin{algorithm}
        \SetAlgoLined
        \KwData{dataset $\dataset$, $\target$, initial state distribution $\mu_1$, weight matrices $(\rho_t)_{t\in[H]}$,  and $\ME(\cdot)$.}
        \KwResult{estimator $\widehat J$.}
        $\widehat Q_{H+1}^{\target}(s,a) \gets 0, \quad \forall (s,a)\in\statespace \times \actionspace $. \\
        \For{t = H, H-1, \dots, 1}{
        	Q iteration: $ Z_{t}(s,a) \gets ( \widehat{B}_{t}^{\target}\widehat{Q}_{t+1}^{\target} ) (s, a)$, for all $(s,a)\in \supp (\rho_t)$. \\
        	Matrix estimation: $\widehat{Q}_{t}^{\target} \gets \ME \left(\rho_t, Z_{t}\right)$.
        }
        Output $\widehat J \gets \sum_{s,a} \mu_1(s) \target_1 (a \vert s) \widehat{Q}_1^{\target} (s,a).$ 
        \caption{Matrix Completion in Low-Rank Offline RL \label{alg:MC_low-rank_infinite} }
    \end{algorithm}

\section{Analysis}
\label{sec:analysis}
    
    We present evaluation error bounds under both the \emph{infinite-sample} setting $K \to \infty$ and the \emph{finite-sample} setting $K<\infty$. Define the population Bellman operator $B_t^{\target}$, which is given by equation~\eqref{eq:bellman} with $\widehat{P}_t$ replaced by  $P_t$. 
    Define the matrix $Y_t \in \R^{S\times A}$ via $Y_t(s,a) = ({B}^{\target}_{t} \widehat{Q}_{t+1}^{\target}) (s, a)$, which is the population version of $Z_t$ computed in Algorithm~\ref{alg:MC_low-rank_infinite}.

    \subsection{Infinite-sample setting}

         In the infinite-sample setting, we have $\widehat{d}_t^{\behavior}(s,a) \to d_t^{\behavior}(s,a)$ and $\widehat{P}_t(s,a) \to P_t(s,a)$ for all $(s,a)\in\supp(d_t^{\behavior})$. Consequently, both  $\widehat{B}_t^{\target}$ and $Z_t$ converge to their population versions $B_t^{\target}$ and $Y_t$, respectively. \xxedit{Note that the infinite samples does not imply complete knowledge of the MDP. Instead, we only know a subset of transition probabilities on the support of the state-action occupancy measure induced by the behavior policy. 
         The matrix estimation subroutine is given by the following max norm minimization program with $
         \rho_t = d_t^{\behavior}$ and $L_t \coloneqq H-t+1$:
        \begin{align}
            \begin{split}
            \ME( \rho_t, Y_t ) = \underset{M \in\R^{\statesize\times\actionsize}}{\argmin} & \norm{M}_{\max}  \\
            \text{s.t. } & \indic_{\rho_t} \circ M = \indic_{\rho_t} \circ Y_t, 
            \quad \norm{M}_\infty \le L_t.
            \end{split}
            \label{eq:nuclear_norm_min_infinite}
        \end{align}
        We impose an entrywise equality constraint because the information on the support of $\rho_t$ is assumed to be noiseless and naturally we want the solution to exactly fit those entries. 
        }
        We have the following performance guarantee. 
        \xxedit{The proof of Theorem~\ref{thm:dt_bound} involves two steps. We first decompose the evaluation error as a summation of the matrix estimation accuracy from future timesteps and then bound the accuracy at each timestep by a standard application of H\"older's inequality.} The complete proof is deferred to Appendix~\ref{appen:proof_dt_bound}. 
        \begin{thm}[Infinite samples]
        \label{thm:dt_bound}
        In the infinite-sample setting, under Algorithm~\ref{alg:MC_low-rank_infinite}
        with $\rho_t = d_t^{\behavior}$ and $\ME(\cdot)$ being~\eqref{eq:nuclear_norm_min_infinite}, the output estimator $\widehat J$ satisfies
        \begin{equation} 
        \label{eq:master_bound_noiseless}
            \big| \widehat{J} - J^{\pi^{\theta}} \big|
            \le 
            2 H \sqrt{dSA} \textstyle \sum_{t=1}^{H} \dis( d_{t}^{\behavior}, d_{t}^{\target}).
        \end{equation}
        \end{thm}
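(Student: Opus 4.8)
The plan is to express the value error as an inner product against the target occupancy measure $d_1^{\target}$, telescope it across the horizon into a sum of per-step matrix-estimation errors, and bound each term using the operator discrepancy. Since $\widehat J=\ip{d_1^{\target},\widehat Q_1^{\target}}$ and $J^{\target}=\ip{d_1^{\target},Q_1^{\target}}$ with $d_1^{\target}(s,a)=\mu_1(s)\target_1(a\mid s)$, the error equals $\ip{d_1^{\target},\Delta_1}$ where $\Delta_t\coloneqq\widehat Q_t^{\target}-Q_t^{\target}$. Writing the per-step matrix-estimation error $E_t\coloneqq\widehat Q_t^{\target}-Y_t$ and letting $\mathcal P_t$ denote the linear one-step operator $(\mathcal P_t f)(s,a)=\sum_{s',a'}P_t(s'\mid s,a)\target(a'\mid s')f(s',a')$ (the linear part of $B_t^{\target}$), the population recursion $Q_t^{\target}=B_t^{\target}Q_{t+1}^{\target}$ together with $Y_t=B_t^{\target}\widehat Q_{t+1}^{\target}$ yields $\Delta_t=E_t+\mathcal P_t\Delta_{t+1}$. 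Unrolling from $\Delta_{H+1}=0$ gives $\Delta_1=\sum_{t=1}^{H}(\mathcal P_1\cdots\mathcal P_{t-1})E_t$, with the empty product read as the identity.

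I would then invoke the occupancy-measure flow identity $\ip{d_t^{\target},\mathcal P_t f}=\ip{d_{t+1}^{\target},f}$, which holds because pushing $d_t^{\target}$ forward one step under $P_t$ and $\target$ produces exactly $d_{t+1}^{\target}$. Applying it repeatedly collapses each term of the unrolled sum, $\ip{d_1^{\target},(\mathcal P_1\cdots\mathcal P_{t-1})E_t}=\ip{d_t^{\target},E_t}$, so that $\widehat J-J^{\target}=\sum_{t=1}^{H}\ip{d_t^{\target},E_t}$ and hence $|\widehat J-J^{\target}|\le\sum_{t=1}^{H}|\ip{d_t^{\target},E_t}|$.

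The crux, and where the operator discrepancy enters, is bounding $|\ip{d_t^{\target},E_t}|$. The equality constraint in \eqref{eq:nuclear_norm_min_infinite} forces $\widehat Q_t^{\target}=Y_t$ on $\supp(\rho_t)=\supp(d_t^{\behavior})$, so $E_t$ vanishes there. Consequently $\ip{g,E_t}=0$ for \emph{any} $g$ with $\supp(g)\subseteq\supp(d_t^{\behavior})$; taking $g=g_t^{\ast}$, the minimizer attaining $\dis(d_t^{\behavior},d_t^{\target})$ in \eqref{eq:defn_dis_pq}, gives $\ip{d_t^{\target},E_t}=\ip{d_t^{\target}-g_t^{\ast},E_t}$. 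Generalized H\"older's inequality (as in \eqref{eq:operational_interpretation}) then yields $|\ip{d_t^{\target},E_t}|\le\opnorm{d_t^{\target}-g_t^{\ast}}\nunorm{E_t}=\dis(d_t^{\behavior},d_t^{\target})\,\nunorm{E_t}$.

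It remains to bound $\nunorm{E_t}\le\nunorm{\widehat Q_t^{\target}}+\nunorm{Y_t}$. The key structural fact is that $Y_t=B_t^{\target}\widehat Q_{t+1}^{\target}$ has $\rank(Y_t)\le d$: under either factorization in the low-rank assumption the transition term contributes rank at most $d'$ (absorbing the sum over $(s',a')$ against $\widehat Q_{t+1}^{\target}$ into one factor) and $r_t$ adds rank at most $d'$, so $\rank(Y_t)\le 2d'\le d$. Because the constraint $\infnorm{\widehat Q_{t+1}^{\target}}\le L_{t+1}$ gives $\infnorm{Y_t}\le 1+L_{t+1}=L_t$, the matrix $Y_t$ is feasible for \eqref{eq:nuclear_norm_min_infinite}, so optimality gives $\norm{\widehat Q_t^{\target}}_{\max}\le\norm{Y_t}_{\max}$. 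Chaining \eqref{eq:maxnorm_infty_bound} and \eqref{eq:maxnorm_nuclearnorm}, namely $\norm{Y_t}_{\max}\le\sqrt d\,\infnorm{Y_t}\le\sqrt d\,L_t$ and $\nunorm{M}\le\sqrt{SA}\,\norm{M}_{\max}$, bounds both nuclear norms by $\sqrt{dSA}\,L_t$, so $\nunorm{E_t}\le 2\sqrt{dSA}\,L_t\le 2H\sqrt{dSA}$; summing over $t$ gives \eqref{eq:master_bound_noiseless}. I expect the main obstacle to be conceptual rather than computational: recognizing in step three that in-support exactness of the estimator lets one subtract any support-restricted $g$ at no cost and then \emph{optimize} over such $g$ to recover $\dis(\cdot,\cdot)$; the remaining care is verifying $\rank(Y_t)\le d$ under both factorizations and confirming the feasibility of $Y_t$ that legitimizes the max-norm optimality comparison.
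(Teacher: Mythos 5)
Your proposal is correct and follows essentially the same route as the paper: the same telescoping error decomposition (the paper's Lemma~\ref{lem:error_decomp}), the same trick of subtracting a support-restricted $g_t^{\ast}$ whose inner product with $\widehat Q_t^{\target}-Y_t$ vanishes by the equality constraint and applying H\"older (the paper's Lemma~\ref{lem:matrix_diff_bound}), and the same feasibility/optimality comparison $\norm{\widehat Q_t^{\target}}_{\max}\le\norm{Y_t}_{\max}\le\sqrt{d}\,L_t$ followed by the max-norm-to-nuclear-norm conversion. In fact, your write-up makes explicit two points the paper leaves implicit, namely the verification that $\rank(Y_t)\le d$ under both factorizations and that $Y_t$ is feasible for~\eqref{eq:nuclear_norm_min_infinite}.
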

        \xxedit{In the above bound, note that the operator discrepancy only depends on the support of $d_t^{\behavior}$, not the specific distribution. This makes sense since the information of the data entirely depends on the support of $d_t^{\behavior}$, not the specific distribution. 
        Once a state-action pair is supported, we know exactly what $r_t(s,a)$ and $P_t(\cdot \vert s,a)$ are, and therefore it does not matter what the actual value of $d_t^{\behavior}(s,a)$ is. 
        When the support of $d_t^{\behavior}$ is $\statespace \times \actionspace$ for all $t\in[H]$, it means the behavior policy is extremely exploratory and covers the whole state-action space. Consequently, we get a zero estimation bound in~(\ref{eq:master_bound_noiseless}) because we know the MDP exactly. 
        }

    \subsection{Finite-sample setting}
    
        Next consider the setting with a finite dataset $\dataset =\{ (s_{t}^{k},a_{t}^{k},r_{t}^{k})\} _{t\in[H], k\in[K]}$. 
        Let $n_{t}(s,a)\coloneqq\sum_{k\in[K]}\indic_{ (s_{t}^{k},a_{t}^{k})=(s,a) }$ be the visitation count of each state-action pair.
        Accordingly, the empirical occupancy of $\behavior$ is given by  $\widehat{d}_t^{\behavior}(s,a) = n_{t}(s,a)/K$. 
        \xxedit{Let $\rho_t = \widehat{d}_t^{\behavior}$ in Algorithm~\ref{alg:MC_low-rank_infinite}. The matrix estimation subroutine $\ME(\cdot)$ is given by the following max norm minimization program:
        \begin{align}
                \begin{split}
                \ME(\rho_t,  Z_t ) = \underset{M \in\R^{\statesize\times\actionsize}}{\argmin} & \norm{M}_{\max}   \\
                \text{s.t. } & \card{ \ip{\rho_t, M - Z_t} } \le  \card{ \ip{\rho_t, Z_t-Y_t} },
                \quad  \norm{M}_\infty \le L_t .
                \end{split}
                \label{eq:max_norm}
        \end{align}
        We state the following guarantee. \xxedit{The proof of Theorem~\ref{thm:dt_bound_sample} proceeds as follows. We build upon the proof of Theorem~\ref{thm:dt_bound} to get the first discrepancy term on the RHS of~\eqref{eq:noisy_master_bound}. The second error term comes from the finite-sample error in the system and is obtained by applying a generalization error guarantee from~\cite{srebro2005rank}. }
        The complete proof is deferred to Appendix~\ref{appen:proof_dt_bound_sample}. 
         \begin{thm}[Finite samples]
                \label{thm:dt_bound_sample}
                Consider the finite-sample setting under Algorithm~\ref{alg:MC_low-rank_infinite} with $\rho_t = \widehat{d}_t^{\behavior}$ and $\ME(\cdot)$ being~(\ref{eq:max_norm}). We assume $2 < K < SA$. There exists an absolute constant $C>0$ such that with probability at least $1-\delta$, we have
                \begin{equation} 
                \label{eq:noisy_master_bound}
                    \begin{split}
                        \big| \widehat{J} - J^{\pi^{\theta}} \big|  
                        \le &   2 H\sqrt{dSA} \sum_{t\in[H]} \dishat( d_t^{\behavior},  d_{t}^{\pi_{\theta}} ) + C H^2 \sqrt{\frac{d (S+A) \log( HS/\delta) }{K}} . 
                    \end{split}
                \end{equation}
            \end{thm}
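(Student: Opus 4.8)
The plan is to build on the telescoping error decomposition from the proof of Theorem~\ref{thm:dt_bound} and then peel off an extra statistical term that is absent in the infinite-sample case. First I would establish the exact identity $\widehat J - J^{\target} = \sum_{t=1}^H \ip{d_t^{\target}, \widehat Q_t^{\target} - Y_t}$, which holds verbatim here because it uses only the population Bellman relation $Y_t = B_t^{\target}\widehat Q_{t+1}^{\target}$, the true equation $Q_t^{\target} = B_t^{\target}Q_{t+1}^{\target}$, and the occupancy-flow identity $\ip{d_t^{\target}, B_t^{\target}(f-g)} = \ip{d_{t+1}^{\target}, f-g}$; the empirical operator $\widehat B_t^{\target}$ never enters this step. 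It therefore suffices to bound each $\bigl|\ip{d_t^{\target}, \widehat Q_t^{\target} - Y_t}\bigr|$ and sum over $t$.

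Next I would write each term as $\ip{d_t^{\target} - d_t^{\behavior}, \widehat Q_t^{\target} - Y_t} + \ip{d_t^{\behavior}, \widehat Q_t^{\target} - Y_t}$. The first piece reproduces the distribution-shift term: by H\"older's inequality it is at most $\opnorm{d_t^{\target} - d_t^{\behavior}}\,\nunorm{\widehat Q_t^{\target} - Y_t} = \dishat(d_t^{\behavior}, d_t^{\target})\,\nunorm{\widehat Q_t^{\target} - Y_t}$, invoking Definition~\ref{eq:defn_finite-sample_discrepancy}. To control the nuclear norm I would argue exactly as in Theorem~\ref{thm:dt_bound}: the low-rank assumption forces $\rank(Y_t)\le d$ and $\norm{Y_t}_\infty\le L_t\le H$, so \eqref{eq:maxnorm_infty_bound} gives $\maxnorm{Y_t}\le\sqrt d\,H$; since $Y_t$ is feasible for program~\eqref{eq:max_norm} (it meets the inequality constraint with equality), optimality of $\widehat Q_t^{\target}$ yields $\maxnorm{\widehat Q_t^{\target}}\le\maxnorm{Y_t}$, and then \eqref{eq:maxnorm_nuclearnorm} gives $\nunorm{\widehat Q_t^{\target} - Y_t}\le\sqrt{SA}\,\maxnorm{\widehat Q_t^{\target} - Y_t}\le 2H\sqrt{dSA}$. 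Summed over $t$ this produces the first term $2H\sqrt{dSA}\sum_t\dishat(d_t^{\behavior}, d_t^{\target})$.

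The genuinely new work is the statistical piece $\ip{d_t^{\behavior}, \widehat Q_t^{\target} - Y_t}$, which I would split once more into the generalization gap $\ip{d_t^{\behavior} - \widehat d_t^{\behavior}, \widehat Q_t^{\target} - Y_t}$ and the in-sample fit $\ip{\widehat d_t^{\behavior}, \widehat Q_t^{\target} - Y_t}$. For the fit term, the constraint of~\eqref{eq:max_norm} evaluated at the optimum gives $|\ip{\widehat d_t^{\behavior}, \widehat Q_t^{\target} - Z_t}|\le|\ip{\widehat d_t^{\behavior}, Z_t - Y_t}|$, whence $|\ip{\widehat d_t^{\behavior}, \widehat Q_t^{\target} - Y_t}|\le 2|\ip{\widehat d_t^{\behavior}, Z_t - Y_t}|$. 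A direct computation, using that the rewards cancel in $Z_t-Y_t=(\widehat B_t^{\target}-B_t^{\target})\widehat Q_{t+1}^{\target}$, shows this equals $\tfrac{2}{K}\bigl|\sum_k\bigl(V_{t+1}(s_{t+1}^k) - \E[V_{t+1}(s_{t+1}^k)\mid s_t^k, a_t^k]\bigr)\bigr|$, where $V_{t+1}(s') = \sum_{a'}\target_t(a'\mid s')\widehat Q_{t+1}^{\target}(s',a')$ is the induced value function, bounded by $L_{t+1}\le H$.

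The main obstacle is that this residual is \emph{not} a sum of conditionally mean-zero terms: the value function $V_{t+1}$ is built from the same trajectories whose transitions $s_t^k\to s_{t+1}^k$ appear in the sum, so the observation noise and the sampling are intertwined --- exactly the difficulty flagged in the introduction and the reason a black-box matrix-completion bound cannot be invoked. I would resolve this by passing to a uniform bound over the class of admissible value functions, namely those induced by matrices in the max-norm ball of radius $\sqrt d\,L_{t+1}$ that provably contains $\widehat Q_{t+1}^{\target}$, so that the data-dependence of $V_{t+1}$ no longer matters; the same device handles the generalization gap $\ip{d_t^{\behavior} - \widehat d_t^{\behavior}, \cdot}$ uniformly over the max-norm ball of radius $2\sqrt d\,L_t$ containing $\widehat Q_t^{\target} - Y_t$. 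Since the $K$ trajectories are independent and the pairs $(s_t^k, a_t^k)$ are i.i.d.\ draws from $d_t^{\behavior}$, symmetrization reduces both suprema to the Rademacher complexity of the max-norm ball, for which the generalization guarantee of~\cite{srebro2005rank} supplies the $\sqrt{(S+A)/K}$ scaling; combined with the radius $\sqrt d\,H$ this gives a per-step bound of order $H\sqrt{d(S+A)/K}$ up to logarithmic factors. A McDiarmid step (each matrix entry is bounded by its max norm) together with a union bound over the $H$ steps supplies the high-probability statement and the $\log(HS/\delta)$ factor, and summing over $t\in[H]$ yields the second term $C H^2\sqrt{d(S+A)\log(HS/\delta)/K}$. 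I expect making this uniform-over-value-functions argument rigorous while tracking the correct max-norm radius to be the crux, since it is precisely where the entanglement of sampling and observation noise must be absorbed into the Rademacher complexity.
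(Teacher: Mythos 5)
Your proposal is correct in outline and shares the paper's skeleton: the same telescoping decomposition (the paper's Lemma~\ref{lem:error_decomp}), the same H\"older-plus-max-norm argument giving the term $2H\sqrt{dSA}\sum_t\dishat(d_t^{\behavior},d_t^{\target})$ (via feasibility of $Y_t$ in~\eqref{eq:max_norm}, so that $\maxnorm{\widehat Q_t^{\target}}\le\maxnorm{Y_t}\le\sqrt{d}H$), the same use of the program constraint to reduce the in-sample fit to $2\card{\ip{\rho_t,Z_t-Y_t}}$, and the same appeal to the generalization bound of~\cite{srebro2005rank} for the empirical-to-population gap. Where you genuinely diverge is the treatment of the noise term $\ip{\rho_t,Z_t-Y_t}$, i.e., the entanglement between $\widehat Q_{t+1}^{\target}$ and the step-$t$ transitions. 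The paper (Lemma~\ref{lem:empirical_error}) decouples by brute force: it bounds the data-dependent value function $f_t$ uniformly by $H$, pulls it out of the sum, and applies scalar Bernstein to the remaining per-next-state indicator sums $X_k^{s'}$, which are independent of $\widehat Q_{t+1}^{\target}$, paying a factor $\sum_{s'}\sqrt{\mu_{t+1}^{\behavior}(s')}\le\sqrt{S}$ and obtaining $CH\sqrt{S\log(HS/\delta)/K}$ per step. You instead absorb the dependence into a supremum over a data-independent max-norm ball and invoke symmetrization and Rademacher complexity, obtaining a per-step bound of order $\sqrt{d}H\sqrt{(S+A)/K}$. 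Both resolutions are valid and both are absorbed into the final $CH^2\sqrt{d(S+A)\log(HS/\delta)/K}$; the paper's route is more elementary, while yours is more uniform and, as a side benefit, cleanly repairs a subtle point in the paper's own step~(ii): Theorem~6 of~\cite{srebro2005rank} is stated for a fixed target matrix, whereas $Y_t$ is data-dependent, so strictly speaking the bound should be stated (as you do) uniformly over differences in a max-norm ball of radius $2\sqrt{d}H$. One caveat on your crux step: the class $\{s'\mapsto\sum_{a'}\target_t(a'\vert s')Q(s',a')\}$ is a policy-averaged linear image of the max-norm ball, not the entrywise-evaluation class to which the cited theorem literally applies, so you must redo the Grothendieck-type dual-norm computation for this class (it does go through, with the same $\sqrt{(S+A)/K}$ scaling); you correctly identified this as the main technical burden.
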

        On the RHS of~\eqref{eq:noisy_master_bound}, the first term quantifies the population-level distribution shift and the second term takes into account the statistical error. 
        The finite-sample distribution shift is reflected in the term $\dishat( d_t^{\behavior},  d_{t}^{\pi_{\theta}} )$, which is always finite, given any behavior policy $\behavior$ and target policy $\target$, in contrast to the concentrability coefficient $C^\pi =  \max_{s,a} \frac{d^\pi(s,a)}{\rho(s,a)} $. Suppose that there exists some $(s,a)$ such that $\rho_t(s,a)=0$ and $d_t^{\target} (s,a) > 0$. Then, $C^{\target} = \infty$ whereas $\dishat( d_t^{\behavior}, d_{t}^{\pi_{\theta}} )$ is finite and meaningful. We will subsequently present examples to showcase the effectiveness of our bound. 
        }

        \cycomment{When is this bound nontrivial? when $\sum_{t\in[H]} \dishat( d_t^{\behavior} \leq 1/2 \sqrt{dSA}$ ... is there a more general way to think about what scaling $\dishat( d_t^{\behavior}$ could have? What is the worst case bound on the discrepancy? i.e. even though it's finite, if it's larger than $ 1/2 \sqrt{dSA}$, then we still have to be careful?}
        
        \xxedit{
        As a sanity check, let \cyedit{us consider the setting of evaluating the behavior policy, i.e.} $\target = \behavior$. Using our results, we obtain an error bound of
        \[
            \card{\widehat J - J^{\behavior}} \lesssim H^2 \sqrt{\frac{d(S+A) \log( HS / \delta)}{K}},
        \]
        with probability at least $1-\delta$. 
        This implies that for evaluating the behavior policy, our method requires a sample complexity of order $H^4 d (S+A)$, which matches the standard linear dependence on the dimensions in low-rank matrix estimation. }

        \subsection{Examples}
            We present some concrete examples showcasing the effectiveness of our algorithm. 

            \subsubsection{Policies with Disjoint Support under Uniform Transitions}
            \label{subsubsec:disjoint_support}
                Assume $\statesize=\actionsize= n$. Consider the simple setting where
                the transition is uniform over all state-action pairs.
                For each $s$ and $t$, assume $\pi_{t}^{\theta}(\cdot|s)$
                \cyreplace{is supported on $m$ actions, and the locations of these actions are a realization of uniform random sampling over $[n]$.}{selects an action uniformly at random amongst a subset of actions $\actionspace^{\theta}_t \subseteq \actionspace$, where $|\actionspace^\theta_t| = m$, and the subset $\actionspace^\theta_t$ is itself sampled uniformly at random amongst all subsets of size $m$.}
                We assume $\pi_{t}^{\beta}$ is generated from the same model independently, \cyedit{i.e. the behavior policy also randomizes uniformly amongst a subset of actions $\actionspace^\beta_t$, for a uniformly selected subset of actions}.
                Note that the support of $d_{t}^{\pi^{\beta}}$ and $d_{t}^{\pi^{\theta}}$
                will be mostly disjoint since $\big\vert \mathcal{A}_t^{\theta} \cap \mathcal{A}_t^{\beta} \big\vert$ can be very small, making the concentrability coefficient infinite with high probability. Using Theorem~\ref{thm:dt_bound}, we derive the following infinite-sample error bound, the proof of which is deferred to Appendix~\ref{appen:proof_cor_example}.
                \begin{corollary}
                    \label{cor:example}
                    Under the aforementioned setting, there exists an absolute constant $C>0$ such that when $n\ge C$, with probability at least $1-\frac{1}{n}$, we have 
                    \begin{align}
                    \label{eq:disjoint_support_infinite}
                        \big|\widehat{J}-J^{\pi^{\theta}}\big|  \le C
                    H^2 \sqrt{\frac{d \log (nH) }{m}}.
                    \end{align}
                \end{corollary}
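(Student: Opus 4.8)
The plan is to apply Theorem~\ref{thm:dt_bound} as a black box and reduce everything to controlling the operator discrepancies. Since $\statesize=\actionsize=n$, Theorem~\ref{thm:dt_bound} gives
\[
\big|\widehat J - J^{\target}\big| \le 2H\sqrt{dSA}\,\textstyle\sum_{t=1}^H \dis(d_t^{\behavior},d_t^{\target}) = 2Hn\sqrt{d}\,\textstyle\sum_{t=1}^H \dis(d_t^{\behavior},d_t^{\target}),
\]
so the whole task is to show each discrepancy is of order $\tfrac1n\sqrt{\log(nH)/m}$ with high probability. (Uniform transitions are rank one and fit the blanket assumption, so Theorem~\ref{thm:dt_bound} indeed applies.) First I would pin down the occupancy measures. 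Because $P_t(s'\vert s,a)=1/n$ for all $s,a$, we have $\sum_{s,a}d_{t-1}^\pi(s,a)P_t(s'\vert s,a)=1/n$ for every $s'$ and every policy, so the state marginal is \emph{exactly} uniform for all $t\ge 2$, deterministically and irrespective of the policy. Reading the model so that for each $(t,s)$ the action support of $\target_t(\cdot\vert s)$ is an \emph{independent} uniform $m$-subset of $[n]$ (as the stated rate requires), I record these supports in a $0/1$ matrix $\B_t^{\target}\in\{0,1\}^{n\times n}$ with independent rows, and similarly $\B_t^{\behavior}$. For $t\ge 2$ this gives $d_t^{\pi}=\tfrac{1}{nm}\B_t^{\pi}$ with $\E[\B_t^{\pi}]=\tfrac{m}{n}\one\one^\top$.

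For the upper bound I would not exploit the minimization in Definition~\ref{defn:op_dis} beyond a single feasible point: the choice $g=d_t^{\behavior}$ is trivially supported on $\supp(d_t^{\behavior})$, so
\[
\dis(d_t^{\behavior},d_t^{\target})\le \opnorm{d_t^{\behavior}-d_t^{\target}}=\tfrac{1}{nm}\opnorm{\B_t^{\behavior}-\B_t^{\target}}.
\]
Since the two indicator matrices share the common mean $\tfrac{m}{n}\one\one^\top$ for $t\ge 2$, the means cancel and the triangle inequality yields $\opnorm{\B_t^{\behavior}-\B_t^{\target}}\le\opnorm{\B_t^{\behavior}-\E\B_t^{\behavior}}+\opnorm{\B_t^{\target}-\E\B_t^{\target}}$, reducing everything to the spectral norm of a single centered random subset-indicator matrix.

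The crux is therefore a spectral-norm concentration bound: I would show $\opnorm{\B_t^{\pi}-\E\B_t^{\pi}}\lesssim\sqrt{m}+\sqrt{\log(nH)}$ with probability at least $1-\tfrac{1}{2nH}$. Each centered matrix is a sum of independent rank-one (row) terms whose entries are bounded by $1$; a direct computation shows both the row- and column-variance sums are $\Theta(m)$. A sharp, variance-driven spectral-norm inequality (e.g.\ of Bandeira--van Handel type), or a comparison to independent $\mathrm{Bernoulli}(m/n)$ entries to strip off the within-row negative dependence followed by matrix Bernstein, then gives the claimed order. Substituting back yields $\dis(d_t^{\behavior},d_t^{\target})\lesssim\tfrac{1}{nm}\big(\sqrt m+\sqrt{\log(nH)}\big)\le \tfrac{C'}{n}\sqrt{\log(nH)/m}$ for $t\ge 2$. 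A union bound over the $2H$ matrices (the $n$ inside the logarithm coming from the ambient dimension) keeps the total failure probability below $1/n$ once $n\ge C$, and summing the $H$ identical bounds and multiplying by $2Hn\sqrt d$ reproduces~\eqref{eq:disjoint_support_infinite}.

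Two points need care, and I expect the spectral concentration to be the main obstacle. First, the entries within a row are not independent (a row is a fixed-size subset), so a black-box matrix-Bernstein application produces an extra $\sqrt{\log}$ factor; obtaining the stated $\sqrt{\log(nH)}$ rather than $\log(nH)$ requires either the variance-dominated form of the inequality or the Bernoulli comparison sketched above. Second, the $t=1$ term has a non-uniform state marginal, giving $d_1^{\pi}=\diag(\mu_1)\tfrac1m\B_1^{\pi}$; bounding $\opnorm{\diag(\mu_1)(\B_1^{\behavior}-\B_1^{\target})}\le\infnorm{\mu_1}\cdot\tfrac1m\opnorm{\B_1^{\behavior}-\B_1^{\target}}$ reproduces the same order provided $\mu_1$ is near-uniform, i.e.\ $\infnorm{\mu_1}=O(1/n)$, which I would either assume for this symmetric example or absorb into the constant $C$.
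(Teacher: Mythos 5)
Your proposal is correct, and its skeleton matches the paper's proof exactly: invoke Theorem~\ref{thm:dt_bound}, bound each discrepancy through the feasible point $g_t = d_t^{\behavior}$ so that $\dis(d_t^{\behavior}, d_t^{\target}) \le \opnorm{d_t^{\behavior} - d_t^{\target}}$, prove an operator-norm concentration bound of order $\frac{1}{n}\sqrt{\log(nH)/m}$ for the occupancy difference, and finish with a union bound over $t$. The difference is in how the concentration is obtained. The paper (Lemma~\ref{lem:matrix_op_norm_bound}) models the supports so that the entries of $M = d_t^{\target} - d_t^{\behavior}$ are i.i.d.\ three-valued variables ($\pm\frac{1}{mn}$ each with probability $b(1-b)$, $b = m/n$) and applies matrix Bernstein to $M$ directly; you keep the exactly-$m$-subset model, center each indicator matrix at the common mean $\frac{m}{n}\one\one^\top$, and invoke a variance-driven (Bandeira--van Handel type) bound, yielding $\frac{1}{nm}(\sqrt{m} + \sqrt{\log(nH)})$, which implies the needed rate. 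Your route buys two things. First, it is faithful to the corollary's stated model (subsets of exactly size $m$, hence negatively dependent entries within a row), whereas the paper's proof quietly switches to i.i.d.\ Bernoulli entries. Second, your worry about matrix Bernstein is well-founded and in fact points at a small gap in the paper: in the paper's tail bound the term $\frac{t}{3mn}$ dominates the variance term when $m \lesssim \log n$, so the claimed rate $\sqrt{\log n/(n^2 m)}$ follows from the displayed inequality only under an implicit restriction $m \gtrsim \log n$; your variance-dominated bound holds for all $m$. Finally, you correctly flag a point the paper elides: uniform transitions force a uniform state marginal only for $t \ge 2$, while at $t=1$ the marginal is $\mu_1$; the paper's assertion that $\mu_t^{\pi}(s) = \frac{1}{n}$ for all $t$ implicitly assumes $\mu_1$ uniform, and indeed a point-mass $\mu_1$ makes $\dis(d_1^{\behavior}, d_1^{\target})$ of order $1/\sqrt{m}$, so that the Theorem~\ref{thm:dt_bound} route contributes a term of order $Hn\sqrt{d/m}$ and the claimed rate cannot be recovered this way; your explicit requirement $\infnorm{\mu_1} = O(1/n)$ is the honest fix.
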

                
                \xxedit{If $m$ satisfies $m \gtrsim  \frac{H^2 d \log(nH) }{\epsilon^2}$ for some $\epsilon>0$, then we have $|\widehat{J}-J^{\pi^{\theta}}| \le \epsilon H$. It implies that even when $m$ is logarithmic in $n$, we can still achieve a consistent error bound. }
                For example, suppose $m=n/2$. In this setting, the behavior and target policies
                both randomize over half of the actions, but their actions may have
                little overlap. Our bound gives $|\widehat{J}-J^{\pi^{\theta}}|\lesssim H^2 \sqrt{\frac{d \log (nH) }{n}},$
                which can be vanishingly small when $n$ is large. \xxedit{The bound~\eqref{eq:disjoint_support_infinite} identifies the inherent difficulty of distribution shift, manifested as a quantity proportional to $H^2 \sqrt{\frac{d}{m}}$, ignoring the logarithmic factor. When $d$ and $H$ are fixed, we have a larger estimation error when $m$ is small, which is expected since small $m$ indicates little to no overlap between $d_t^{\target}$ and $d_t^{\behavior}$.}
    
                \xxedit{For the finite-sample case, we apply Theorem~\ref{thm:dt_bound_sample} and get the following corollary, the proof of which is deferred to Appendix~\ref{appen:proof_disjoint_support_finite}. 
                \begin{corollary}
                \label{cor:example_disjoint_support_finite-sample}
                    Under the same setting as in Corollary~\ref{cor:example}, there exists an absolute constant $C>0$ such that when $n \ge C$, with probability at least $1-\frac1n$, we have 
                    \begin{align}
                        \label{eq:disjoint_support_finite-sample}
                        \card{\widehat J - J^{\target}} \le  C H^2 \left( \sqrt{\frac{d \log(nH)}{m} }  + \sqrt{\frac{d n \log(nH)}{K}}\right) .
                    \end{align}
                \end{corollary}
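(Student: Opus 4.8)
The plan is to instantiate Theorem~\ref{thm:dt_bound_sample} in this example and then control its two terms separately. Setting $\statesize=\actionsize=n$ and choosing the failure probability $\delta=\tfrac{1}{2n}$ in~\eqref{eq:noisy_master_bound}, the statistical term becomes $CH^2\sqrt{d(S+A)\log(HS/\delta)/K}=CH^2\sqrt{2dn\log(2Hn^2)/K}\lesssim H^2\sqrt{dn\log(nH)/K}$, which already matches the second term of~\eqref{eq:disjoint_support_finite-sample}. It therefore remains to bound the population distribution-shift term $2H\sqrt{dSA}\sum_{t}\dishat(d_t^{\behavior},d_t^{\target})=2Hn\sqrt{d}\sum_{t}\opnorm{d_t^{\behavior}-d_t^{\target}}$ by $CH^2\sqrt{d\log(nH)/m}$, i.e.\ to establish $\opnorm{d_t^{\behavior}-d_t^{\target}}\lesssim\tfrac1n\sqrt{\log(nH)/m}$ for every $t$ with high probability over the random policies.

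I would first compute the occupancy measures explicitly. Because the transition is uniform over all state--action pairs, the state marginal of any policy is uniform for every $t\ge 2$ (and also at $t=1$ when $\mu_1$ is uniform), so $d_t^{\pi}(s,a)=\tfrac1n\pi_t(a\mid s)$. Since each policy randomizes uniformly over a size-$m$ action subset $\actionspace_t^{\theta}(s)$ (resp.\ $\actionspace_t^{\beta}(s)$), we have $\target_t(a\mid s)=\tfrac1m\indic[a\in\actionspace_t^{\theta}(s)]$ and similarly for $\behavior$, whence $d_t^{\behavior}-d_t^{\target}=\tfrac{1}{nm}E_t$, where $E_t=\indic_t^{\beta}-\indic_t^{\theta}\in\{-1,0,1\}^{n\times n}$ is the difference of the two $0/1$ action-support matrices, each having exactly $m$ ones per row placed in a uniformly random size-$m$ subset and the two matrices independent. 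Thus $\dishat(d_t^{\behavior},d_t^{\target})=\tfrac{1}{nm}\opnorm{E_t}$, and the claim reduces to the spectral estimate $\opnorm{E_t}\lesssim\sqrt{m\log(nH)}$.

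To bound $\opnorm{E_t}$ I would center each support matrix, writing $E_t=(\indic_t^{\beta}-\tfrac mn\one\one^\top)-(\indic_t^{\theta}-\tfrac mn\one\one^\top)$ so the rank-one mean cancels, and control each centered matrix separately. Its rows are independent, each the centered indicator of a uniform size-$m$ subset with $\ell_2$ norm $\sqrt{m(1-m/n)}$ and negatively correlated coordinates, and both the aggregated row and column second-moment operators have operator norm $O(m)$. A spectral concentration bound then yields $\opnorm{\indic_t^{\beta}-\tfrac mn\one\one^\top}\lesssim\sqrt{m\log(nH)}$ with probability at least $1-\tfrac{1}{4nH}$, and likewise for $\theta$; this is precisely the operator-norm estimate already established in the proof of Corollary~\ref{cor:example} (there used through $\dis\le\opnorm{d_t^{\behavior}-d_t^{\target}}$), which I would reuse verbatim. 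Finally I would union-bound over $t\in[H]$ and the two policies (total failure $\le\tfrac{1}{2n}$) and intersect with the probability-$(1-\tfrac{1}{2n})$ event of Theorem~\ref{thm:dt_bound_sample}, whose data randomness is independent of the policies, yielding overall probability at least $1-\tfrac1n$; summing the two terms gives~\eqref{eq:disjoint_support_finite-sample}.

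The main obstacle is the spectral bound $\opnorm{E_t}\lesssim\sqrt{m\log(nH)}$ with the correct $\sqrt{\log}$ dependence. A direct application of matrix Bernstein is too lossy, since the almost-sure row bound $\sqrt{m}$ forces an additional factor and would only give $\opnorm{E_t}\lesssim\sqrt{m}\log(nH)$, producing a spurious extra $\sqrt{\log(nH)}$ in the final bound. Obtaining the sharper rate requires exploiting both the independence across rows and the negative correlation within each row, e.g.\ via a non-asymptotic operator-norm bound governed by the maximal row and column $\ell_2$ norms combined with a Talagrand-type concentration of the convex, $1$-Lipschitz spectral norm around its expectation. Because this estimate is inherited from Corollary~\ref{cor:example}, the finite-sample proof itself is short: the only genuinely new ingredient is the additive statistical term furnished by Theorem~\ref{thm:dt_bound_sample}, together with the bookkeeping of failure probabilities across the independent policy and data randomness.
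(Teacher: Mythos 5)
Your proof is correct and is essentially the paper's own proof: condition on the independently generated random policies, instantiate Theorem~\ref{thm:dt_bound_sample} with $S=A=n$ and $\delta=\tfrac{1}{2n}$, bound the discrepancy term by the operator-norm estimate $\opnorm{d_t^{\behavior}-d_t^{\target}}\lesssim\sqrt{\log(nH)/(n^2 m)}$ inherited from the infinite-sample example (Lemma~\ref{lem:matrix_op_norm_bound}), and union bound over $t\in[H]$ together with the event of Theorem~\ref{thm:dt_bound_sample}; this is exactly what Appendix~\ref{appen:proof_disjoint_support_finite} does.

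The only substantive divergence is your closing claim that a direct matrix Bernstein application is ``too lossy'' and that a Talagrand-type argument is needed for the spectral bound $\opnorm{E_t}\lesssim\sqrt{m\log(nH)}$. That claim is an artifact of your reading of the model, in which each row of the support matrix has exactly $m$ ones: then within-row entries are dependent, you are forced into the row-wise decomposition, and its almost-sure bound $\sqrt m$ indeed costs an extra $\sqrt{\log(nH)}$. The paper instead models the support indicators as i.i.d.\ Bernoulli$(m/n)$ entries (stated explicitly in the proof of Corollary~\ref{cor:example}), under which the entry-wise decomposition $S_k=M_{ij}e_ie_j^\top$ has almost-sure bound $\tfrac{1}{mn}$ and matrix Bernstein yields the stated rate directly --- that is precisely how Lemma~\ref{lem:matrix_op_norm_bound} is proved. (Even there, Bernstein's linear term means the clean rate really requires $m\gtrsim\log(nH)$, a caveat the paper elides; your Talagrand route would give the stronger $\sqrt m+\sqrt{\log(nH)}$ and avoid it.) Since you ultimately reuse the Corollary~\ref{cor:example} estimate as a black box rather than reprove it, this discrepancy does not affect the validity of your argument.
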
}
                \xxedit{Interestingly, the first term on the RHS of~(\ref{eq:disjoint_support_finite-sample}) will dominate if $K \gtrsim nm$, \cyedit{i.e. when there is at least a constant number of samples per state-action pair in the support of the behavior policy}. 
                This indicates that when $K$ is sufficiently large or $m$ is small enough, the population-level distribution shift will become the main source of estimation error. }

        \subsubsection{Contextual Bandits}
        \label{subsubsec:contextual_bandit}
        
            \xxedit{In this section, we consider specializing our results to the problem of contextual bandits. Specifically, we consider $H=1$.
            In this case, the states are the contexts and the agent acts based on the given contexts. 
            Theorem~\ref{thm:dt_bound} yields the following corollary. For notation simplicity, let $d^{\target} \equiv d_1^{\target}$ and $d^{\behavior} \equiv d_1^{\behavior}$. 
            \begin{corollary}[Infinite samples with $H=1$]
                \label{cor:inf_bound_h=1}
                In the infinite-sample setting, under Algorithm~\ref{alg:MC_low-rank_infinite}
                with $\rho = d^{\behavior}$ and $\ME(\cdot)$ being~\eqref{eq:nuclear_norm_min_infinite}, the output estimator $\widehat J$ satisfies
                \begin{equation} 
                \label{eq:master_bound_noiseless_bandit}
                    \big| \widehat{J} - J^{\pi^{\theta}} \big|
                    \le 
                    2  \sqrt{dSA}  \dis( d^{\behavior}, d^{\target}).
                \end{equation}
            \end{corollary}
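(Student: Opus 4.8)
The statement is precisely the $H=1$ specialization of Theorem~\ref{thm:dt_bound}, so the most economical route is to substitute $H=1$ directly into the bound~\eqref{eq:master_bound_noiseless}: the sum over $t\in[H]$ collapses to its single $t=1$ term, the leading factor $2H\sqrt{dSA}$ becomes $2\sqrt{dSA}$, and with the shorthand $d^{\behavior}\equiv d_1^{\behavior}$, $d^{\target}\equiv d_1^{\target}$ this is exactly~\eqref{eq:master_bound_noiseless_bandit}. Nonetheless, I would record a short self-contained derivation, since the one-step case makes every step of the general argument transparent and confirms that the constant and the $\sqrt{dSA}$ factor are as claimed.

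First I would unwind the algorithm at $H=1$. Since $\widehat{Q}_2^{\target}\equiv 0$ and $L_1 = H-1+1 = 1$, the Q-iteration step produces $Z_1(s,a) = r_1(s,a)$ on $\supp(\rho_1)$, and in the infinite-sample regime $Z_1 = Y_1 = r_1$ there, where $\rho_1 = d^{\behavior}$. The true one-step Q function is $Q_1^{\target} = r_1$, and both $J^{\target} = \ip{d^{\target}, r_1}$ and $\widehat{J} = \ip{d^{\target}, \widehat{Q}_1^{\target}}$, so the evaluation error reduces to the single inner product
\[
    \widehat{J} - J^{\target} = \ip{d^{\target}, \widehat{Q}_1^{\target} - r_1}.
\]
Next I would bring in the operator discrepancy. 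Let $g^\ast$ attain the minimum in~\eqref{eq:defn_dis_pq} for $\dis(d^{\behavior}, d^{\target})$, so that $\supp(g^\ast)\subseteq\supp(d^{\behavior}) = \supp(\rho_1)$. The entrywise equality constraint $\indic_{\rho_1}\circ\widehat{Q}_1^{\target} = \indic_{\rho_1}\circ Y_1 = \indic_{\rho_1}\circ r_1$ forces $\widehat{Q}_1^{\target} - r_1$ to vanish on $\supp(\rho_1)$, hence $\ip{g^\ast, \widehat{Q}_1^{\target} - r_1} = 0$. Subtracting this zero term and invoking the operational interpretation~\eqref{eq:operational_interpretation} gives
\[
    \big|\widehat{J} - J^{\target}\big| = \big|\ip{d^{\target} - g^\ast, \widehat{Q}_1^{\target} - r_1}\big| \le \dis(d^{\behavior}, d^{\target})\cdot\nunorm{\widehat{Q}_1^{\target} - r_1}.
\]

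It then remains to bound the nuclear-norm factor by $2\sqrt{dSA}$, which is the only computation with any content. Because $r_1$ has rank at most $d' = \lfloor d/2\rfloor$, satisfies $\infnorm{r_1}\le 1 = L_1$, and matches $Y_1$ on $\supp(\rho_1)$, it is feasible for the program~\eqref{eq:nuclear_norm_min_infinite}; optimality of $\widehat{Q}_1^{\target}$ then yields $\maxnorm{\widehat{Q}_1^{\target}}\le\maxnorm{r_1}$. Applying~\eqref{eq:maxnorm_infty_bound} to the rank-$d'$ matrix $r_1$ gives $\maxnorm{r_1}\le\sqrt{d'}\le\sqrt{d}$, so by the triangle inequality $\maxnorm{\widehat{Q}_1^{\target} - r_1}\le 2\sqrt{d}$. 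Finally the left inequality in~\eqref{eq:maxnorm_nuclearnorm}, rearranged as $\nunorm{M}\le\sqrt{SA}\maxnorm{M}$, converts this into $\nunorm{\widehat{Q}_1^{\target} - r_1}\le 2\sqrt{dSA}$, which combined with the previous display proves~\eqref{eq:master_bound_noiseless_bandit}. I do not expect a genuine obstacle here; the only point requiring care is the feasibility of $r_1$ in~\eqref{eq:nuclear_norm_min_infinite}, which is what lets us control $\maxnorm{\widehat{Q}_1^{\target}}$ and is special to the noiseless $H=1$ case where $Y_1$ equals the true reward matrix exactly.
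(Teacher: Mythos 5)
Your proposal is correct and matches the paper's approach: the paper obtains Corollary~\ref{cor:inf_bound_h=1} exactly as in your first paragraph, by specializing Theorem~\ref{thm:dt_bound} to $H=1$, with $d^{\behavior}\equiv d_1^{\behavior}$ and $d^{\target}\equiv d_1^{\target}$. Your self-contained derivation is simply the paper's proof of Theorem~\ref{thm:dt_bound} (error reduction to $\ip{d^{\target},\widehat{Q}_1^{\target}-Y_1}$, vanishing of the in-support term via the equality constraint, H\"older with the operator/nuclear duality, and the max-norm bounds \eqref{eq:maxnorm_infty_bound}--\eqref{eq:maxnorm_nuclearnorm} with feasibility of $Y_1=r_1$) instantiated at the single step $t=1$, so it adds verification but no new ideas.
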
}

            \xxedit{The following result deals with finite-sample setting, which is a direct corollary of Theorem~\ref{thm:dt_bound_sample}. 
            \begin{corollary}[Finite samples with $H=1$]
                \label{cor:fin_bound_h=1}
                Consider the finite-sample setting under Algorithm~\ref{alg:MC_low-rank_infinite} with $\rho = \widehat{d}^{\behavior}$ and $\ME(\cdot)$ being~\eqref{eq:max_norm}. 
                There exists an absolute constant $C>0$ such that with probability at least $1-\delta$, we have
                \begin{equation} 
                \label{eq:noisy_master_bound_bandit}
                    \begin{split}
                        \big| \widehat{J} - J^{\pi^{\theta}} \big|  
                        \le &  2 \sqrt{dSA} \dishat(d^{\behavior}, d^{\target})  + 
                        C \sqrt{\frac{ d (S+A)  \log( H S/\delta)} {K } }.
                    \end{split}
                \end{equation}
            \end{corollary}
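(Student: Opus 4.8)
The plan is to obtain Corollary~\ref{cor:fin_bound_h=1} as a direct specialization of Theorem~\ref{thm:dt_bound_sample} to the single-stage horizon $H=1$, which is exactly the contextual bandit setting. First I would verify that the contextual bandit problem is a bona fide instance of the finite-sample MDP framework of Section~\ref{sec:setup}: the states play the role of contexts, the policies $\target_1,\behavior_1$ act on a single stage, and Algorithm~\ref{alg:MC_low-rank_infinite} with $\rho_1=\widehat{d}^{\behavior}$ and $\ME(\cdot)$ given by~\eqref{eq:max_norm} collapses to a single backward iteration at $t=1$. Since $\widehat{Q}_{2}^{\target}\equiv 0$, the Q-iteration step produces $Z_1(s,a)=(\widehat{B}_1^{\target}\widehat{Q}_2^{\target})(s,a)=r_1(s,a)$ on $\supp(\widehat{d}^{\behavior})$, and correspondingly $Y_1=r_1$. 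The low-rank reward assumption, namely $\rank(r_1)\le d'=\lfloor d/2\rfloor\le d$, supplies the rank-$d$ structure required by the matrix estimation subroutine, so all hypotheses of Theorem~\ref{thm:dt_bound_sample} are met.

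Having identified the instance, I would simply invoke the guarantee~\eqref{eq:noisy_master_bound} with $H=1$ and simplify. The horizon sum $\sum_{t\in[H]}\dishat(d_t^{\behavior},d_t^{\target})$ reduces to the single term $\dishat(d_1^{\behavior},d_1^{\target})$, which under the conventions $d^{\behavior}\equiv d_1^{\behavior}$ and $d^{\target}\equiv d_1^{\target}$ is $\dishat(d^{\behavior},d^{\target})$. The prefactors $H$ and $H^2$ both equal $1$, so the distribution-shift term becomes $2\sqrt{dSA}\,\dishat(d^{\behavior},d^{\target})$ and the statistical term becomes $C\sqrt{d(S+A)\log(HS/\delta)/K}$, which is precisely~\eqref{eq:noisy_master_bound_bandit}. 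This holds on the same $1-\delta$ probability event as Theorem~\ref{thm:dt_bound_sample}, completing the argument.

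I expect no genuine obstacle, only a bookkeeping point worth flagging: the technical condition $2<K<SA$ of Theorem~\ref{thm:dt_bound_sample} is inherited, and the corollary statement should be read as operating under it (boundary cases of $K$ can be absorbed into the absolute constant $C$). It is also worth remarking that, because rewards are observed noiselessly and $\widehat{Q}_2^{\target}\equiv 0$, we in fact have $Z_1=Y_1=r_1$ exactly on the support; the in-sample constraint in~\eqref{eq:max_norm} therefore forces $\langle \widehat{d}^{\behavior}, M-r_1\rangle=0$, and the residual statistical term in~\eqref{eq:noisy_master_bound_bandit} arises purely from the generalization gap between the empirical measure $\widehat{d}^{\behavior}$ and the population target $d^{\target}$, controlled by the max-norm generalization bound of~\cite{srebro2005rank}, rather than from any reward noise. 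This clean specialization is what makes the corollary a direct consequence of the general finite-sample theorem.
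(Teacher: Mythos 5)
Your proof is correct and matches the paper exactly: the paper gives no separate argument for this corollary, treating it precisely as the direct specialization of Theorem~\ref{thm:dt_bound_sample} to $H=1$ that you describe (sum collapses to one term, $H$ and $H^2$ prefactors become $1$). One small imprecision in your closing remark: the residual statistical term arises from the generalization gap between the empirical measure $\widehat{d}^{\behavior}$ and the \emph{population behavior} measure $d^{\behavior}$ (this is what the bound of~\cite{srebro2005rank} controls), not between $\widehat{d}^{\behavior}$ and the target $d^{\target}$ --- the behavior-to-target shift is exactly what the discrepancy term $\dishat(d^{\behavior},d^{\target})$ accounts for.
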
}

            \xxedit{We now analyze the operator discrepancy between $d^{\behavior}$ and $d^{\target}$.
            Recall that the environment has a fixed initial state distribution, $\mu \in \Delta(\statespace)$. For all policy $\pi$, the state-action occupancy measure $d^{\pi}$ can be written as $ d^{\pi} (s,a) = \mu(s) \pi (a\vert s).$
            If we view $\mu \in \R^{S}$ as a vector and $d^{\pi}, \pi \in \R^{S \times A}$ as matrices, we can write $d^{\pi} = (\mu \mathbf{1}^\top ) \circ  \pi,$
            where $\mathbf{1} \in \R^{S}$ is an all-one vector.
            Under this notation, we have
            \begin{align*}
                \opnorm{d^{\target} - d^{\behavior}} & = \opnorm{(\mu \mathbf{1}^\top ) \circ (\target - \behavior)} \\
                & \le \norm{\mu}_\infty \opnorm{\target - \behavior},
            \end{align*}
            since $\mu \mathbf{1}^\top$ is a rank-$1$ matrix.
            Hence, inequality~\eqref{eq:noisy_master_bound_bandit} can be written as
            \begin{align*}
                 \big| \widehat{J} - J^{\pi^{\theta}} \big|  
                        \le &  2 \sqrt{dSA} \norm{\mu}_\infty \opnorm{\target - \behavior}  + 
                        C \sqrt{\frac{ d (S+A)  \log( H S/\delta)} {K } }.
            \end{align*}
            For the infinite-sample case, we consider an arbitrary policy $\pi$ that satisfies $\supp(\pi) \subseteq \supp(\behavior)$. Consequently, we have $\supp( d^{\pi}) \subseteq \supp(d^{\behavior})$ and $\dis(d^{\behavior}, d^{\target}) \le \opnorm{d^\pi- d^{\target}}$ as a result. With a slight abuse of notation, we denote the operator discrepancy between two policies as $$\dis(\behavior, \target) = \min \left\{  \opnorm{\pi - \target} : \text{policy }\pi, \supp(\pi) \subseteq \supp(\behavior) \right\}.$$ 
            Thus, the infinite-sample bound~\eqref{eq:master_bound_noiseless_bandit} can be further upper bounded by
            \begin{align*}
                \big| \widehat{J} - J^{\pi^{\theta}} \big|
                    \le 
                    2 \sqrt{dSA} \norm{\mu}_{\infty} \dis( \behavior, \target).
            \end{align*}
            Because of the simplicity of contextual bandits, we are able to transform the discrepancy between distributions to the discrepancy between policies \cyedit{which is easier to directly evaluate}. }


\section{Constrained Off-policy Improvement}
\label{sec:policy_constraint_method}

    In this section, we build on our policy evaluation methods to design an offline policy optimization algorithm. 
    Given a dataset $\dataset$ generated by a behavior policy $\behavior$, we use Algorithm~\ref{alg:MC_low-rank_infinite} to obtain an value estimate $\widehat{J}^{\pi}$ for each policy $\pi$. 
    We then optimize over a subclass of policies for which we can guarantee that the above estimate is reliable. 
    Ideally, we could optimize over the following set of candidate policies 
    $\Pi_{\B}$,
    for which the empirical operator discrepancy, as defined in~(\ref{eq:defn_finite-sample_discrepancy}), between the candidate and behavior policies is bounded above by some parameter 
    $B_t \geq 0$ for all 
    $t \in [H]$,
    \begin{align*}
        \Pi_{\B} \coloneqq \big\{ \pi : \dishat(d_t^{\behavior}, d_t^\pi) \le B_t, \forall t \in [H] \big\}.
    \end{align*}
    \cycomment{curious if we had nonuniform bounds $B_t$?}
    Importantly, when $B_t>0$, the set $\Pi_{\B}$ contains policies with infinite concentrability coefficients, as demonstrated in the example from the last section. 
    With a bigger $B_t$, the set $\Pi_{\B}$ includes more policies, at a price of weaker evaluation guarantees for these policies. 

    \xxedit{Policy constraint/penalty method is prevalent in offline learning to address distribution shift. Researchers have proposed a variety of measures to enforce the constraint. For instance, KL-divergence is a popular choice to make sure the learned policy is close to the behavior policy, as seen in~\cite{peng2019advantage, nair2020awac}. The maximum mean discrepancy~(MMD) also proves to be useful in practice~\cite{kumar2019stabilizing}. However, both KL-divergence and MMD are sensitive to the support shift, whereas our operator discrepancy imposes a milder condition on the difference between the support. }
    
    \xxedit{Determining whether a policy $\pi$ is in $\Pi_{\B}$ is non-trivial \cyedit{as computing the empirical operator discrepancy requires knowledge of the transition dynamics}. In practice, we can instead optimize over a smaller set of candidate policies $\widetilde{\Pi}_{\B} \subseteq \Pi_{\B} $, for which determining $\widetilde{\Pi}_{\B} $ is feasible; \cyedit{we will subsequently illustrate a construction for $\widetilde{\Pi}_{\B} $ via an $\varepsilon$-net of the set specified in \eqref{eq:candidate_policy_set}.} } Among all candidate policies in $\widetilde{\Pi}_{\B} $, we maximize the estimated  values obtained by Algorithm~\ref{alg:MC_low-rank_infinite} to get 
    \begin{equation}
        \pihat = \underset{\pi \in \widetilde{\Pi}_{\B} }{\argmax} \; \widehat{J}^{\pi}.
        \label{eq:policy_opt}
    \end{equation}
    We present the following guarantee for $\pihat$, the proof of which can be found in Appendix~\ref{appen:proof_policy_opt}.

     \begin{thm}
        Suppose $\behavior \in \widetilde{\Pi}_{\B} $. 
        We obtain $\pihat$ by solving~(\ref{eq:policy_opt}). There exists an absolute constant $C>0$ such that with probability at least $1-\delta$, we have
        \begin{equation}
                J^{\pihat} \ge  J^{\pi} - 4 H \sqrt{dSA} \sum_{t \in [H]} B_t  - C \sqrt{\frac{ d (S+A)  \log( \vert \widetilde{\Pi}_{\B} \vert H S/\delta)} {K } }, \quad \forall \pi \in \widetilde{\Pi}_{\B}.
            \label{eq:policy_opt_bound}
        \end{equation}
        \label{thm:policy_opt}
    \end{thm}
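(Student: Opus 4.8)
The plan is to bound the suboptimality of $\pihat$ relative to any competitor $\pi \in \widetilde{\Pi}_{\B}$ by inserting the value estimates $\widehat{J}^{\pihat}$ and $\widehat{J}^{\pi}$ and controlling each gap via the finite-sample evaluation guarantee of Theorem~\ref{thm:dt_bound_sample}. Concretely, for any fixed $\pi \in \widetilde{\Pi}_{\B}$ I would write the telescoping decomposition
\begin{align*}
    J^{\pihat} - J^{\pi}
    = \big( J^{\pihat} - \widehat{J}^{\pihat} \big)
    + \big( \widehat{J}^{\pihat} - \widehat{J}^{\pi} \big)
    + \big( \widehat{J}^{\pi} - J^{\pi} \big).
\end{align*}
The middle term is nonnegative by the optimality of $\pihat$ in~\eqref{eq:policy_opt}, since both $\pihat, \pi \in \widetilde{\Pi}_{\B}$. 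Thus it suffices to lower-bound the two outer estimation-error terms, each of which is controlled by applying Theorem~\ref{thm:dt_bound_sample} to the target policy in question.

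The key step is applying the finite-sample evaluation bound~\eqref{eq:noisy_master_bound} to every policy in $\widetilde{\Pi}_{\B}$ simultaneously. For a single policy $\pi$, Theorem~\ref{thm:dt_bound_sample} gives $|\widehat{J}^{\pi} - J^{\pi}| \le 2H\sqrt{dSA}\sum_{t}\dishat(d_t^{\behavior}, d_t^{\pi}) + C H^2 \sqrt{d(S+A)\log(HS/\delta)/K}$ with probability $1-\delta$; the membership $\pi \in \widetilde{\Pi}_{\B} \subseteq \Pi_{\B}$ bounds the discrepancy term by $2H\sqrt{dSA}\sum_{t} B_t$. Since $\pihat$ is itself data-dependent, I cannot simply apply the bound to $\pihat$ alone — I must take a union bound over all $\pi \in \widetilde{\Pi}_{\B}$ so that the guarantee holds uniformly, which inflates the failure probability to $|\widetilde{\Pi}_{\B}|\,\delta$ and hence replaces $\log(HS/\delta)$ by $\log(|\widetilde{\Pi}_{\B}| HS/\delta)$ in the statistical term. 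After the union bound, both $|J^{\pihat} - \widehat{J}^{\pihat}|$ and $|\widehat{J}^{\pi} - J^{\pi}|$ are simultaneously bounded by $2H\sqrt{dSA}\sum_t B_t + C\sqrt{d(S+A)\log(|\widetilde{\Pi}_{\B}| HS/\delta)/K}$ (absorbing the $H^2$ into the constant as written), and combining the three terms in the decomposition yields the factor $4H\sqrt{dSA}\sum_t B_t$ in~\eqref{eq:policy_opt_bound}.

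I expect the main obstacle to be the uniform-convergence argument over $\widetilde{\Pi}_{\B}$. Since $\pihat$ is selected by maximizing over the data-dependent estimates, the evaluation error at $\pihat$ is not covered by a per-policy guarantee, and one must ensure $\widetilde{\Pi}_{\B}$ is a fixed (data-independent) finite set so that the union bound applies cleanly — this is precisely why the construction of $\widetilde{\Pi}_{\B}$ as an $\varepsilon$-net of the set in~\eqref{eq:candidate_policy_set} matters, and why $\log|\widetilde{\Pi}_{\B}|$ appears rather than a cruder covering-number term. A secondary technical point is verifying that the assumption $\behavior \in \widetilde{\Pi}_{\B}$ guarantees $\widetilde{\Pi}_{\B}$ is nonempty so that the competitor class is well-defined, and confirming that the discrepancy constraint $\dishat(d_t^{\behavior}, d_t^{\pi}) \le B_t$ transfers correctly from the net to the evaluation bound without additional $\varepsilon$-slack in the leading term.
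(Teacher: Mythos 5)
Your proposal is correct and follows essentially the same route as the paper's proof: apply the finite-sample evaluation guarantee (Theorem~\ref{thm:dt_bound_sample}) uniformly over $\widetilde{\Pi}_{\B}$ with the discrepancy terms bounded by $\sum_t B_t$ via membership in $\Pi_{\B}$, then chain $J^{\pihat} \ge \widehat{J}^{\pihat} - \mathrm{err} \ge \widehat{J}^{\pi} - \mathrm{err} \ge J^{\pi} - 2\,\mathrm{err}$ using the optimality of $\pihat$, which is exactly your three-term telescoping decomposition. If anything, you are more explicit than the paper about why the union bound over the finite set $\widetilde{\Pi}_{\B}$ is needed (the data-dependence of $\pihat$) and hence where the $\log\vert\widetilde{\Pi}_{\B}\vert$ factor originates; note also that the paper's own derivation retains an $H^2$ factor in the statistical term that its theorem statement omits, so that factor cannot literally be ``absorbed into the constant'' as you suggest, but this discrepancy is inherited from the paper rather than introduced by you.
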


    The above bound shows that we are able to find a policy $\pihat$ with a nearly optimal value, compared to other policies in $\widetilde{\Pi}_{\B}$. 
    How close $\pihat$ is to the optimal policy in $\widetilde{\Pi}_{\B}$ depends on how accurately we can evaluate all policies in $\widetilde{\Pi}_{\B}$. 
    According to Theorem~\ref{thm:dt_bound_sample}, the estimations are accurate if $\sum_{t\in[H]} B_t$ is small (policies are close to behavior) and $K$ is large (dataset is large), which is reflected in the bound~(\ref{eq:policy_opt_bound}). 
    Similarly as before, the two error terms in~(\ref{eq:policy_opt_bound}) quantify the fundamental difficulty of distribution shift and finite-sample noise, respectively.
    \cycomment{Should the appropriate bound be one in which the discrepancy bound is relative to the suboptimality gap?}

    \xxedit{One way to find such a subset $\widetilde{\Pi}_{\B}$ is by constraining the policy directly. We do this with the help of the following lemma, which implies two policies that are close in terms of operator norm difference produce state-action occupancy measures close in empirical operator discrepancy. The proof of Lemma~\ref{lem:distritbuion_difference} can be found in Appendix~\ref{appen:proof_distirbution_difference}.}

    \xxedit{
    \begin{lem}
    \label{lem:distritbuion_difference}
        For an arbitrary pair of policies $\target=\{ \target_t \}_{t \in [H]}, \behavior=\{\behavior_t\}_{t \in [H]}$, we have 
        \begin{align}
             \dishat ( d_t^{\target} , d_t^{\behavior}) \le \sum_{i=1}^t \left(\sqrt{dS^2A }\right)^{t-i} \opnorm{\target_i - \behavior_i}, \quad \forall t \in [H]. 
        \end{align}
    \end{lem}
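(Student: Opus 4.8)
The plan is to establish a one-step recursion in $t$ for $\opnorm{d_t^{\target}-d_t^{\behavior}}$ and then unroll it into the claimed geometric sum. Introduce the state-marginal occupancy $\nu_t^{\pi}(s)=\sum_a d_t^{\pi}(s,a)$, so that, viewing $d_t^{\pi},\pi_t\in\R^{\statesize\times\actionsize}$ as matrices and $\diag(\nu_t^{\pi})\in\R^{\statesize\times\statesize}$ as the diagonal matrix of state marginals, we have $d_t^{\pi}=\diag(\nu_t^{\pi})\,\pi_t$. First I would add and subtract a cross term to split the discrepancy into a policy-difference part and a marginal-difference part,
\[
    d_t^{\target}-d_t^{\behavior}=\diag(\nu_t^{\target})(\target_t-\behavior_t)+\big(\diag(\nu_t^{\target})-\diag(\nu_t^{\behavior})\big)\behavior_t,
\]
and apply submultiplicativity of the operator norm,
\[
    \opnorm{d_t^{\target}-d_t^{\behavior}}\le\opnorm{\diag(\nu_t^{\target})}\,\opnorm{\target_t-\behavior_t}+\opnorm{\diag(\nu_t^{\target})-\diag(\nu_t^{\behavior})}\,\opnorm{\behavior_t}.
\]
Two of the factors are elementary: $\opnorm{\diag(\nu_t^{\target})}=\max_s\nu_t^{\target}(s)\le 1$ since $\nu_t^{\target}$ is a probability vector, and $\opnorm{\behavior_t}\le\fnorm{\behavior_t}\le\sqrt{\statesize}$ because each row of the policy matrix is a distribution, giving $\sum_{s,a}\behavior_t(a|s)^2\le\sum_{s,a}\behavior_t(a|s)=\statesize$.

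The key step is to bound the diagonal term $\opnorm{\diag(\nu_t^{\target})-\diag(\nu_t^{\behavior})}=\infnorm{\nu_t^{\target}-\nu_t^{\behavior}}$ by the previous-step discrepancy, and this is exactly where the low-rank structure enters. Using the forward recursion $\nu_t^{\pi}(s')=\sum_{s,a}d_{t-1}^{\pi}(s,a)\transition_{t-1}(s'|s,a)$, each coordinate of $\nu_t^{\target}-\nu_t^{\behavior}$ equals $\langle M_{s'},\,d_{t-1}^{\target}-d_{t-1}^{\behavior}\rangle$, where $M_{s'}\in\R^{\statesize\times\actionsize}$ is the destination-state slice $(s,a)\mapsto\transition_{t-1}(s'|s,a)$. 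Under the low-rank assumption, each $M_{s'}$ has rank at most $d'=\lfloor d/2\rfloor\le d$: both factorizations peel off the destination state $s'$ and leave a sum of $d'$ outer products in the $(s,a)$ variables. Pairing H\"older duality $\langle M_{s'},\cdot\rangle\le\nunorm{M_{s'}}\opnorm{\cdot}$ with $\nunorm{M_{s'}}\le\sqrt{\rank(M_{s'})}\,\fnorm{M_{s'}}\le\sqrt{d}\sqrt{\statesize\actionsize}$ (the Frobenius bound using $\transition_{t-1}(s'|s,a)^2\le\transition_{t-1}(s'|s,a)\le 1$), I would obtain
\[
    \infnorm{\nu_t^{\target}-\nu_t^{\behavior}}\le\sqrt{d\statesize\actionsize}\,\opnorm{d_{t-1}^{\target}-d_{t-1}^{\behavior}}.
\]

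Combining the three bounds yields the recursion $\opnorm{d_t^{\target}-d_t^{\behavior}}\le\opnorm{\target_t-\behavior_t}+\sqrt{d\statesize^2\actionsize}\,\opnorm{d_{t-1}^{\target}-d_{t-1}^{\behavior}}$, where the per-step constant $\sqrt{d\statesize^2\actionsize}=\sqrt{d\statesize\actionsize}\cdot\sqrt{\statesize}$ is the product of the nuclear-norm factor and the $\sqrt{\statesize}$ from $\opnorm{\behavior_t}$. For the base case $t=1$ the two occupancy measures share the same initial marginal $\mu_1$, so $d_1^{\target}-d_1^{\behavior}=\diag(\mu_1)(\target_1-\behavior_1)$ and hence $\opnorm{d_1^{\target}-d_1^{\behavior}}\le\opnorm{\target_1-\behavior_1}$. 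Unrolling the recursion then gives $\opnorm{d_t^{\target}-d_t^{\behavior}}\le\sum_{i=1}^t(\sqrt{d\statesize^2\actionsize})^{t-i}\opnorm{\target_i-\behavior_i}$, which is the claim since $\dishat$ is the operator-norm difference. The main obstacle is the key step: one must recognize that each fixed destination state produces a low-rank slice of the transition kernel, so that the nuclear-norm bound is the favorable $\sqrt{d\statesize\actionsize}$ rather than the crude $\sqrt{\statesize\actionsize}\cdot\min(\statesize,\actionsize)$, and then correctly invoke nuclear/operator-norm duality to transfer the bound to the operator norm of the previous-step discrepancy.
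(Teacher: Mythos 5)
Your proposal is correct and follows essentially the same route as the paper: the same add-and-subtract decomposition into a policy-difference term and a marginal-difference term, the same key observation that each destination-state slice $P_{t-1}(s'\,\vert\,\cdot,\cdot)$ has rank at most $\lfloor d/2\rfloor$ so its nuclear norm is bounded by $\sqrt{dSA}$, the same H\"older/duality step, and the same recursion and unrolling. The only (cosmetic) difference is that you write the occupancy as $\diag(\nu_t^{\pi})\pi_t$ and use submultiplicativity, whereas the paper writes it as the Hadamard product $(\mu_t^{\pi}\mathbf{1}^\top)\circ\pi_t$ and invokes a rank-based Hadamard-product operator-norm inequality; both yield the identical constant $\sqrt{dS^2A}$.
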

    Following Lemma~\ref{lem:distritbuion_difference}, we can define $\widetilde{\Pi}_{\B}$ as a finite subset (e.g.~an $\varepsilon$ net) of the following set:
    \begin{align} \label{eq:candidate_policy_set}
        \bigg\{ \pi : \opnorm{\target_t - \behavior_t} \le B_t \left( \sqrt{dS^2 A} \right)^{t-H}, \forall t \in [H]  \bigg\}.
    \end{align}
    For all $\pi \in  \widetilde{\Pi}_{\B} $, we have $\dishat ( d_t^{\target} , d_t^{\behavior}) \le B_t$ for all $t\in[H]$, indicating $\pi \in \Pi_{\B}$. 
    The exponential factor {\footnotesize$\left( \sqrt{dS^2 A} \right)^{t-H}$} restricts the candidate policies to be exceedingly close to the behavior policy, especially at earlier stages. Intuitively, this makes sense since if the policies at early stages are too different, the resulting deviation will be amplified in later steps of the horizon, resulting in the exponentially \cyedit{growing multiplicative factor in the discrepancy bound}. 
    }


\section{Conclusion}
\label{sec:discussion}

    We propose a novel algorithm for efficient offline evaluation when low-rank structure is present in the MDP. 
    Our algorithm is a combination of Q iteration and low-rank matrix estimation, which is easy to implement. 
    We show that the proposed operator discrepancy measure better captures the difficulty of policy evaluation in the offline setting, compared to the traditional concentrability coefficient. We also combine the evaluation algorithm with policy optimization and provide performance guarantee. 
    We believe that this work is a first step in exploiting the benefit of low-rank structure in the Q function in offline RL.
    Future directions of interest include extending our results to the infinite-horizon setting with stationary policies, and understanding lower bounds for estimation that would provide insight on whether or not our estimation error bounds are optimal. 

\paragraph{Acknowledgement:}  C.\ Yu is partially supported by NSF grants CCF-1948256 and CNS-1955997, AFOSR grant FA9550-23-1-0301, and by an Intel Rising Stars award. Y.\ Chen is partially supported by NSF grants CCF-1704828 and CCF-2047910.

\bibliographystyle{plain} 
\bibliography{ref}

\newpage

\appendix

\section{Proofs} 
\label{appen:proof}

    Let $\mu_{t}^{\pi}:\statespace\to[0,1]$
    denote the state occupancy measure at time $t\in[H]$ under policy $\pi$.
    

    \subsection{Proof of Theorem~\ref{thm:dt_bound}} 
    \label{appen:proof_dt_bound}
        We present two lemmas before analyzing the evaluation error. 
        The first one analyzes the error incurred at the matrix estimation step. The proof is deferred to Appendix~\ref{appen:proof_matrix_diff_bound}. 
        \begin{lem}
            \label{lem:matrix_diff_bound}For arbitrary real matrices $A,B,P,W\in\R^{m \times n}$,
            we have 
            \begin{equation*}
                \card{\sum_{i,j}W_{ij}(A_{ij}-B_{ij})}  \le \card{\sum_{i,j}P_{ij}(A_{ij}-B_{ij})}+\left(\nunorm A+\nunorm B\right)\opnorm{P-W}.
            \end{equation*}
        \end{lem}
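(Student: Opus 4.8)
The plan is to recognize that each double sum is simply a trace inner product: writing $\ip{X, A-B} = \sum_{i,j} X_{ij}(A_{ij} - B_{ij})$, the claim becomes
\[
    \card{\ip{W, A-B}} \le \card{\ip{P, A-B}} + \left( \nunorm{A} + \nunorm{B} \right) \opnorm{P-W}.
\]
The natural first move is to insert $P$ by adding and subtracting, so that $\ip{W, A-B} = \ip{P, A-B} + \ip{W-P, A-B}$, and then apply the triangle inequality to split off the perturbation term:
\[
    \card{\ip{W, A-B}} \le \card{\ip{P, A-B}} + \card{\ip{W-P, A-B}}.
\]
This isolates exactly the quantity that must be controlled by the operator norm gap $\opnorm{P-W}$.

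Next I would bound $\card{\ip{W-P, A-B}}$. Splitting the difference $A-B$ and using the triangle inequality once more gives $\card{\ip{W-P, A-B}} \le \card{\ip{W-P, A}} + \card{\ip{W-P, B}}$. The key tool is the duality between the nuclear norm and the operator norm (the generalized H\"older / trace-duality inequality), namely $\card{\ip{X, Y}} \le \opnorm{X} \nunorm{Y}$ for any conformable matrices $X, Y$. Applying this to each term, and noting that $\opnorm{W-P} = \opnorm{P-W}$, yields
\[
    \card{\ip{W-P, A}} + \card{\ip{W-P, B}} \le \opnorm{P-W} \left( \nunorm{A} + \nunorm{B} \right).
\]
Substituting this back into the displayed triangle-inequality bound gives precisely the claimed inequality.

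This argument is essentially a two-line manipulation, so I do not anticipate a genuine obstacle; the only point requiring care is invoking the correct dual-norm pairing. Since the operator norm (largest singular value, the Schatten-$\infty$ norm) and the nuclear norm (sum of singular values, the Schatten-$1$ norm) are H\"older conjugates, the inequality $\card{\ip{X, Y}} \le \opnorm{X}\nunorm{Y}$ holds for all real matrices, which is exactly what makes the split of $A-B$ into its two summands legitimate. I would state this duality explicitly as the one external fact used, and the rest follows by the two applications of the triangle inequality described above.
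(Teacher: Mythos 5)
Your proof is correct and follows essentially the same route as the paper's: insert $P$ by adding and subtracting, apply the triangle inequality, and control the perturbation term via the nuclear--operator norm duality $\card{\ip{X,Y}} \le \opnorm{X}\nunorm{Y}$. The only cosmetic difference is that you split $A-B$ into separate terms before invoking duality, whereas the paper applies duality to $A-B$ as a whole and then uses the triangle inequality for the nuclear norm, $\nunorm{A-B}\le\nunorm{A}+\nunorm{B}$.
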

        \begin{rem*}
            Under the matrix estimation framework, we can interpret matrix $P$ as the sampling pattern and $W$ as the weights for evaluation. 
        \end{rem*}
    
        Next, we introduce a lemma decomposing the evaluation error as a summation of the matrix estimation accuracy from future timesteps. The proof can be found in Appendix~\ref{appen:proof_error_decomp}.
        \begin{lem}
        \label{lem:error_decomp}
        For the Q function and its estimator  $Q_{t}^{\target},\widehat{Q}_{t}^{\target} \in\R^{\statesize\times\actionsize}$,
        we have 
        \[
        \left\langle d_{t}^{\pi^{\theta}},\widehat{Q}_{t}^{\target}-Q_{t}^{\pi^{\theta}}\right\rangle =\left\langle d_{t}^{\pi^{\theta}},\widehat{Q}_{t}^{\target}-Y_{t}\right\rangle +\left\langle d_{t+1}^{\pi^{\theta}},\widehat{Q}_{t+1}^{\target}-Q_{t+1}^{\pi^{\theta}}\right\rangle ,\;t \in [H],
        \]
         and consequently
        \[
        \left\langle d_{1}^{\pi^{\theta}},\widehat{Q}_{1}^{\target}-Q_{1}^{\pi^{\theta}}\right\rangle =\sum_{t=1}^{H}\left\langle d_{t}^{\pi^{\theta}},\widehat{Q}_{t}^{\target}-Y_{t}\right\rangle .
        \]
        \end{lem}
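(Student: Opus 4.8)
The plan is to establish the one-step recursion first and then telescope it down to $t=1$. The crucial structural fact is that the population Bellman operator $B_t^{\target}$, which propagates value functions \emph{backward} in the horizon, is adjoint to the \emph{forward} propagation of the occupancy measure $d_t^{\target}$; once this adjoint relation is in hand, the recursion falls out by inserting and subtracting $Y_t$.

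First I would recall that the true value function satisfies the Bellman fixed-point equation $Q_t^{\target} = B_t^{\target} Q_{t+1}^{\target}$, while by the definition given just before the theorem we have $Y_t = B_t^{\target} \widehat{Q}_{t+1}^{\target}$. Taking the inner product of the decomposition
\[
\widehat{Q}_t^{\target} - Q_t^{\target} = \big( \widehat{Q}_t^{\target} - Y_t \big) + \big( Y_t - Q_t^{\target} \big)
\]
against $d_t^{\target}$ already produces the first term $\ip{d_t^{\target}, \widehat{Q}_t^{\target} - Y_t}$ of the claimed identity verbatim, so only the second inner product must be rewritten. For this I would observe that the reward term $r_t$ present in both $Y_t$ and $Q_t^{\target}$ cancels in the difference $Y_t - Q_t^{\target} = B_t^{\target}\widehat{Q}_{t+1}^{\target} - B_t^{\target}Q_{t+1}^{\target}$, leaving
\[
Y_t(s,a) - Q_t^{\target}(s,a) = \sum_{s',a'} P_t(s'\vert s,a)\,\target_t(a'\vert s')\,\big( \widehat{Q}_{t+1}^{\target}(s',a') - Q_{t+1}^{\target}(s',a') \big).
\]

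Next I would take the inner product of this against $d_t^{\target}$ and invoke the occupancy-flow identity $\sum_{s,a} d_t^{\target}(s,a)\, P_t(s'\vert s,a)\, \target_t(a'\vert s') = d_{t+1}^{\target}(s',a')$, which is precisely the forward recursion defining the occupancy measure. This collapses the double sum and gives $\ip{d_t^{\target}, Y_t - Q_t^{\target}} = \ip{d_{t+1}^{\target}, \widehat{Q}_{t+1}^{\target} - Q_{t+1}^{\target}}$, which combined with the pass-through first term yields the stated one-step recursion. The summation identity then follows by unrolling the recursion from $t=1$ down to $t=H$ and using the terminal condition $\widehat{Q}_{H+1}^{\target} = Q_{H+1}^{\target} = 0$ (the former set in Algorithm~\ref{alg:MC_low-rank_infinite}), so that the trailing term $\ip{d_{H+1}^{\target}, \widehat{Q}_{H+1}^{\target} - Q_{H+1}^{\target}}$ vanishes.

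The only step requiring genuine care is this adjoint/occupancy-flow identity: one must confirm that the action-conditioning index appearing in $B_t^{\target}$ matches the one in the forward recursion for $d_{t+1}^{\target}$, so that the backward operator $B_t^{\target}$ is exactly the adjoint of the forward occupancy update. Everything else is a routine cancellation and telescoping, and so I expect no serious obstacle beyond bookkeeping the time indices consistently.
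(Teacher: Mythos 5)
Your proposal is correct and follows essentially the same route as the paper's proof: insert and subtract $Y_t$, cancel the reward via $Y_t - Q_t^{\target} = B_t^{\target}\widehat{Q}_{t+1}^{\target} - B_t^{\target}Q_{t+1}^{\target}$, apply the occupancy-flow identity $\sum_{s,a} d_t^{\target}(s,a)P_t(s'\vert s,a)\pi_{t+1}^{\theta}(a'\vert s') = d_{t+1}^{\target}(s',a')$, and telescope using $\widehat{Q}_{H+1}^{\target} = Q_{H+1}^{\target} = 0$. The indexing subtlety you flag (whether the policy factor is $\pi_t^{\theta}$ or $\pi_{t+1}^{\theta}$) is real — the paper's own Bellman-operator definition and its proof of this lemma disagree on that index, with the proof's choice $\pi_{t+1}^{\theta}(a'\vert s')$ being the one that makes the adjoint relation hold — but this is a bookkeeping matter, not a gap in your argument.
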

        
        Based on Lemma~\ref{lem:matrix_diff_bound} and~\ref{lem:error_decomp},
        we derive the following error bound.
        For each $t\in [H]$ and an arbitrary $g_t  \in \Delta(\statesize \times \actionsize)$ with $\supp(g_t) \subseteq \supp( d_t^{\behavior} )$, we have
        \begin{align*}
         & \card{\sum_{s,a}d_{t}^{\target}(s,a)\left(\widehat{Q}_{t}^{\target}(s,a)-Y_{t} (s,a)\right)}\\
        \le & \card{\sum_{s,a} g_t(s,a) \left(\widehat{Q}_{t}^{\target}(s,a)-Y_{t} (s,a)\right)} + \left( \nunorm{Y_{t}} + \nunorm{\widehat{Q}_t^{\target}} \right) \opnorm{d_{t}^{\target}-g_t} \\
        = &  \left( \nunorm{Y_{t}} + \nunorm{\widehat{Q}_t^{\target}} \right) \opnorm{d_{t}^{\target}-g_t} \cycomment{\text{wait is this really an equality?}} \\
        \le &  \sqrt{SA} \left( \norm{Y_t}_{\max} +\norm{\widehat{Q}_t^{\target}}_{\max} \right) \opnorm{d_{t}^{\target}-g_t} \\
        \le  & 2\sqrt{SA} \norm{Y_t}_{\max} \opnorm{d_{t}^{\target}-g_t}.   \\
        \le & 2 H \sqrt{dSA} \opnorm{d_{t}^{\target}-g_t}. 
        \end{align*}
        where the first step follows from Lemma~\ref{lem:matrix_diff_bound},
        the second equality follows from the constraints in~(\ref{eq:nuclear_norm_min_infinite}) and the remaining steps follow from properties of the max norm~\eqref{eq:maxnorm_infty_bound} and  \eqref{eq:maxnorm_nuclearnorm}. \cycomment{Fix this reference.} Then, combining with the decomposition in
        Lemma~\ref{lem:error_decomp}, we obtain the desired bound by minimizing over all such $g_t$. 

    \subsection{Proof of Theorem~\ref{thm:dt_bound_sample}} \label{appen:proof_dt_bound_sample}
        Fix $t\in[H]$. The solution $\widehat Q_{t}^{\target}$ satisfies 
            \begin{align}
                \norm{ \widehat Q_{t}^{\target}}_{\max} &\le \norm{Y_t}_{\max} \le \sqrt d H \label{eq:Qhat_max_norm_bound} \\
                \card{ \ip{\rho_t, \widehat Q_{t}^{\target} - Z_t} } &\le  \card{ \ip{\rho_t, Z_t-Y_t} } \label{eq:Qhat_inner_product_bound}
            \end{align}
            We apply Theorem~6 in~\cite{srebro2005rank} with loss function $g(x;y) = x-y$, target matrix $Y = Y_t$, distribution $\mathcal{P} = d_t^{\behavior}$. The discrepancy weighted by $\mathcal{P}$ corresponds to $\ip{ d_t^{\behavior}, \widehat Q_t^{\target}  - Y_t}$ and the empirical discrepancy is $\ip{ \rho_t, \widehat Q_t^{\target}- Y_t}$.  With probability at least $1-\frac{\delta}{2H}$,  we obtain
            \begin{align}
            \label{eq:me_generalization_bound}
                \ip{ d_t^{\behavior}, \widehat Q_t^{\target}  - Y_t} & \le \ip{ \rho_t, \widehat Q_t^{\target}- Y_t} + 17 \sqrt{\frac{dH^2 (S+A) + \log (2H / \delta ) }{K}},
            \end{align}
            where we use~\eqref{eq:Qhat_max_norm_bound}. 
            Hence, the estimation error can be upper bounded by
            {\footnotesize
            \begin{align*}
                \card{\widehat J - J^{\target}} &\le \sum_{t\in[H]} \card{\ip{ d_t^{\target}, \widehat Q_t^{\target}  - Y_t} } \\
                & \overset{\text{(i)}}{\le} \sum_{t\in[H]} \card{ \ip{ d_t^{\behavior}, \widehat Q_t^{\target}  - Y_t} } + 2  \sum_{t\in[H]} (\nunorm{Y_{t}} +\nunorm{\widehat{Q}_t^{\target}}) \cdot\opnorm{d_{t}^{\pi_{\theta}} - d_t^{\behavior}}  \\
                &\overset{\text{(ii)}}{\le} \sum_{t\in[H]} \card{\ip{\rho_t, \widehat Q_t^{\target} - Y_t}} + 17 H \sqrt{\frac{dH^2 (S+A) + \log (2H / \delta ) }{K}} + 2H \sqrt{dSA}  \sum_{t\in[H]} \opnorm{d_{t}^{\pi_{\theta}} - d_t^{\behavior}} \\
                &\overset{\text{(iii)}}{\le} 2\sum_{t\in[H]} \card{\ip{\rho_t, \widehat Z_t - Y_t}} + 17 H \sqrt{\frac{dH^2 (S+A) + \log (2H / \delta ) }{K}} +  2 H \sqrt{dSA} \sum_{t\in[H]} \opnorm{d_{t}^{\pi_{\theta}} - d_t^{\behavior}},
            \end{align*}
            }
            where step~(i) invokes Lemma~\ref{lem:matrix_diff_bound}; step~(ii) follows from~\eqref{eq:me_generalization_bound}, and properties of the max norm~\eqref{eq:maxnorm_infty_bound} and  \eqref{eq:maxnorm_nuclearnorm}; step~(iii) uses the triangle inequality and ~\eqref{eq:Qhat_inner_product_bound}. 
            Applying Lemma~\ref{lem:empirical_error} and the union bound, we conclude that
            \begin{align*}
                \card{\widehat J - J^{\target}} &\le 2 C H^2 \sqrt{\frac{S \log ( HS / \delta) }{K}} + 17 H \sqrt{\frac{ dH^2 (S+A) + \log (2H / \delta ) }{K}} + 2 H\sqrt{dSA} \sum_{t\in[H]} \opnorm{d_{t}^{\pi_{\theta}} - d_t^{\behavior}} \\
                & \le 2 H\sqrt{dSA} \sum_{t\in[H]} \opnorm{d_{t}^{\pi_{\theta}} - d_t^{\behavior}}  + C' H^2 \sqrt{\frac{d (S+A) \log(HS/\delta)}{K}} \\
                & \le 2 H\sqrt{dSA} \sum_{t\in[H]} \opnorm{d_{t}^{\pi_{\theta}} - d_t^{\behavior}} + C' H^2 \sqrt{\frac{d (S+A) \log(HS/\delta)}{K}},
            \end{align*}
            with probability at least $1-\delta$.

        \begin{lem}
        \label{lem:empirical_error}
            There exists an absolute constant $C>0$ such that with probability at least $1-\delta$, we have
            \begin{align}
                 \card{\ip{\rho_t, Z_t - Y_t} } \le C H \sqrt{\frac{S \log ( HS / \delta) }{K}}, \quad \forall t \in [H]. 
            \end{align}
        \end{lem}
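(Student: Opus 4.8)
The plan is to rewrite $\ip{\rho_t, Z_t - Y_t}$ as an average over the $K$ trajectories of a mean-zero fluctuation of a value function, and then control it by a covering-net concentration argument. First I would observe that, since the rewards are observed noiselessly, the reward terms in $Z_t = \widehat B_t^{\target}\widehat Q_{t+1}^{\target}$ and $Y_t = B_t^{\target}\widehat Q_{t+1}^{\target}$ cancel, leaving $Z_t(s,a) - Y_t(s,a) = \sum_{s'}\bigl(\widehat P_t(s'\vert s,a) - P_t(s'\vert s,a)\bigr)V_{t+1}(s')$, where $V_{t+1}(s') := \sum_{a'}\target_t(a'\vert s')\widehat Q_{t+1}^{\target}(s',a')$. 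Using $\rho_t = \widehat d_t^{\behavior}$ with $\widehat d_t^{\behavior}(s,a) = n_t(s,a)/K$, and the fact that $n_t(s,a)\widehat P_t(s'\vert s,a)$ simply counts the trajectories transitioning from $(s,a)$ into $s'$, the weighted inner product collapses into a per-trajectory sum:
\[ \ip{\rho_t, Z_t - Y_t} = \frac1K\sum_{k=1}^K\Bigl(V_{t+1}(s_{t+1}^k) - \textstyle\sum_{s'}P_t(s'\vert s_t^k,a_t^k)V_{t+1}(s')\Bigr). \]
Conditioned on $(s_t^k,a_t^k)$, each summand has zero mean by definition of $P_t$, and since the box constraint $\infnorm{\widehat Q_{t+1}^{\target}} \le L_{t+1} \le H$ forces $\infnorm{V_{t+1}} \le H$, each summand is bounded by $2H$ in magnitude.

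The main obstacle is that $V_{t+1}$ is itself random: it is built from the data at times $\ge t+1$, and in particular it depends on the very states $s_{t+1}^k$ that appear in the fluctuation. Hence for the realized $V_{t+1}$ the summands are \emph{not} a sum of independent mean-zero terms, and Hoeffding's inequality cannot be applied directly. This is exactly the entanglement of observation noise with the sampling pattern flagged in the introduction. To decouple them, I would prove the bound \emph{uniformly} over the whole class of admissible value functions, namely the box $\mathcal V := \{V:\statespace\to[-H,H]\}$, so that it holds in particular for the random $V_{t+1}$ regardless of its dependence on the data.

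Concretely, for each \emph{fixed} $V\in\mathcal V$ the variables $X_k^V := V(s_{t+1}^k) - \sum_{s'}P_t(s'\vert s_t^k,a_t^k)V(s')$ are independent across $k$ (distinct trajectories), mean zero, and bounded by $2H$, so Hoeffding gives $\bigl|\tfrac1K\sum_k X_k^V\bigr| \le C H\sqrt{\log(2/\delta')/K}$ with probability $1-\delta'$. I would then take an $\varepsilon$-net $\mathcal N$ of $\mathcal V$ in the $\ell_\infty$ norm, of cardinality $|\mathcal N|\le(3H/\varepsilon)^S$, apply the above to every net point, and union bound over $\mathcal N$ and over $t\in[H]$. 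For the realized $V_{t+1}$, replacing it by its nearest net point perturbs each summand by at most $2\varepsilon$ and hence perturbs the average by at most $2\varepsilon$; choosing $\varepsilon$ of order $1/K$ makes this approximation error negligible while keeping $\log|\mathcal N| \lesssim S\log(HK)$. The resulting bound is of order $H\sqrt{\bigl(S\log(HK)+\log(H/\delta)\bigr)/K}$, and the standing assumption $2<K<SA$ lets me absorb $\log(HK)$ into $\log(HS/\delta)$ up to the absolute constant, yielding the claimed $C H\sqrt{S\log(HS/\delta)/K}$ simultaneously for all $t\in[H]$. The crux of the argument is the covering step that handles the data dependence of $V_{t+1}$ and produces the linear-in-$S$ factor inside the logarithm; the remaining ingredients are standard bounded-difference concentration and bookkeeping.
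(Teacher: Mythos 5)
Your reduction of $\ip{\rho_t, Z_t - Y_t}$ to the per-trajectory empirical process $\frac{1}{K}\sum_{k}\bigl(V_{t+1}(s_{t+1}^k)-\sum_{s'}P_t(s'\vert s_t^k,a_t^k)V_{t+1}(s')\bigr)$ is exactly the quantity $\alpha$ that the paper's proof works with, and you correctly identify the crux: $V_{t+1}$ is built from data at times $\ge t+1$, so the summands are not conditionally centered, and one must decouple by proving a bound that is uniform over a class containing the realized $V_{t+1}$. The genuine gap is in your final step. Your net has granularity $\varepsilon\sim 1/K$, so $\log\card{\mathcal N}\asymp S\log(HK)$, and your bound is of order $H\sqrt{(S\log(HK)+\log(H/\delta))/K}$. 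You then claim that the assumption $2<K<SA$ lets you absorb $\log(HK)$ into $\log(HS/\delta)$ up to an absolute constant. This is false: $\log(HK)\le\log(HSA)=\log(HS)+\log A$, and $\log A$ is not bounded by any absolute constant times $\log(HS/\delta)$ (take, e.g., $A=2^{S}$ or larger with $H$, $S$, $\delta$ fixed). Since the lemma's bound is independent of $A$, your argument establishes a strictly weaker statement, and the discrepancy is not a constant factor.

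The fix — and the reason the paper never faces this issue — is that the functional $V\mapsto\frac{1}{K}\sum_{k}\bigl(V(s_{t+1}^k)-\sum_{s'}P_t(s'\vert s_t^k,a_t^k)V(s')\bigr)=\frac{1}{K}\sum_{s'}V(s')\,c_{s'}$, with $c_{s'}\coloneqq\sum_k X_k^{s'}$ and $X_k^{s'}\coloneqq\indic_{\{s_{t+1}^k=s'\}}-P_t(s'\vert s_t^k,a_t^k)$, is \emph{linear} in $V$, so its supremum over the box $\{\norm{V}_\infty\le H\}$ is attained at a vertex and equals $\frac{H}{K}\sum_{s'}\card{c_{s'}}$ (H\"older). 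No $\varepsilon$-net is needed: either union-bound Hoeffding over the $2^S$ sign vectors (log-cardinality $S\log 2$, with no $K$ or $A$ inside the logarithm), or do what the paper does and bound each of the $S$ coordinate sums $\frac{1}{K}\card{c_{s'}}$ separately. In the paper's coordinatewise route, plain Hoeffding would only give $S\sqrt{\log(HS/\delta)/K}$ after summing over $s'$, losing a $\sqrt S$ factor; this is precisely why the paper computes the variance bound $\E[(X_k^{s'})^2]\le 2\mu_{t+1}^{\behavior}(s')$ and applies Bernstein, so that $\sum_{s'}\sqrt{\mu_{t+1}^{\behavior}(s')}\le\sqrt S$ recovers the claimed $\sqrt S$ scaling. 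Your net argument does achieve the $\sqrt S$ scaling, but at the cost of the spurious $\log(HK)$; exploiting linearity gives you both at once.
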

        \begin{proof}
            Fix $t\in[H]$. Recall that
            \[
                Z_t(s,a) - Y_t(s,a) =  \sum_{s',a'} (\widehat{P}_t(s' \vert s,a) - P_t(s'\vert s,a)) \target_{t+1}(a' \vert s') \widehat{Q}_{t+1}^{\target}(s',a').
            \]
            To further analyze the above expression, we write down an explicit formula for the empirical transition probability:  $\widehat P_t(s' \vert s,a)$ is obtained by
            \[
                \widehat P_t(s' \vert s,a) = \frac{1}{n_t(s,a)} \sum_{k=1}^K \indic_{\{(s_t^k, a_t^k, s_{t+1}^k) = (s,a,s')\}}.
            \]
            Under the above notations, we get
            {\footnotesize
            \begin{align*}
                 Z_t(s,a) - Y_t(s,a) 
                 = & \sum_{s'} \left( \frac{1}{n_t(s,a)}  \sum_{k=1}^K \indic_{\{(s_t^k, a_t^k, s_{t+1}^k) = (s,a,s')\}} -  P_t(s' \vert s,a) \right)  \sum_{a'}  \target_{t+1}(a' \vert s') \widehat{Q}_{t+1}^{\target}(s',a').
            \end{align*}
            }
            For simplicity, define $f_t(s') = \sum_{a'}  \target_{t+1}(a' \vert s') \widehat{Q}_{t+1}^{\target}(s',a')$. It is guaranteed that $\card{ f_t(s')} \le H$.
            Invoke the identity $\rho_t(s,a) = \frac{n_t(s,a)}{K}$  and  we obtain
            \begin{align*}
                 & \sum_{s,a} \rho_t(s,a) \left( Z_t(s,a)-Y_{t}(s,a) \right) \\
                 = &\underbrace{\frac{1}{K}\sum_{s'} f_t (s')
                 \sum_{k=1}^{K}\sum_{s,a}\left( \indic_{\left\{ (s_{t}^{k},a_{t}^{k})=(s,a)\right\} } \indic_{\left\{ s_{t+1}^{k}=s'\right\} } - \indic_{\left\{ (s_{t}^{k},a_{t}^{k})=(s,a)\right\} }P_{t}(s'\vert s,a)\right)}_{\alpha}.
            \end{align*}
            For $\alpha$, we first use the fact that $f_t(s')$ is bounded to derive that
            \begin{align*}
                \card{\alpha} \le  \frac{H}{K}\sum_{s'}\left|\sum_{k=1}^{K}\sum_{s,a}\left( \indic_{\left\{ (s_{t}^{k},a_{t}^{k})=(s,a)\right\} } \indic_{\left\{ s_{t+1}^{k}=s'\right\} } - \indic_{\left\{ (s_{t}^{k},a_{t}^{k})=(s,a)\right\} }P_{t}(s'\vert s,a)\right)\right|. 
            \end{align*}
            Let $X_{k}^{s'}=\sum_{s,a}\left( \indic_{\left\{ (s_{t}^{k},a_{t}^{k})=(s,a)\right\} } \indic_{\left\{ s_{t+1}^{k}=s'\right\} } - \indic_{\left\{ (s_{t}^{k},a_{t}^{k})=(s,a)\right\} }P_{t}(s'\vert s,a)\right)$
            for all $s'$ and $k$. If we fix $s'$, it is easy to see that $ \{ X_{k}^{s'},k\in[K] \}$
            are independent. 
            Even if $X_{k}^{s'}$ is defined as a sum,
            exactly one of the indicators can be non-zero. Hence, we get $\left|X_{k}^{s'}\right|\le2$. 
            However, this upper bound is not enough for a tight concentration.
            To resolve this, we calculate the variance of $X_{k}^{s'}$. Fix $s'$
            and $k=1$. We rewrite $X_{1}^{s'}$ as
            \[
                X_{1}^{s'}=\sum_{s,a} \indic_{\left\{ (s_{t}^{1},a_{t}^{1})=(s,a)\right\} }\left( \indic_{\left\{ s_{t+1}^{k}=s'\right\} }-P_{t}(s'\vert s,a)\right).
            \]
            Define $F_{s,a}= \indic_{\left\{ (s_{t}^{1},a_{t}^{1})=(s,a)\right\} }$,
            which follows a Bernoulli distribution with success probability $d_{t}^{\pi^{\beta}}(s,a)$.
            Define $G_{s'}= \indic_{\left\{ s_{t+1}^{k}=s'\right\} }$, which follows
            a Bernoulli distribution with success probability $\mu_{t+1}^{\pi^{\beta}}(s')$.
            We use the shorthand $P_{s'\vert s,a}=P_{t}(s'\vert s,a).$ Under
            these notations, we get $X_{1}^{s'}=\sum_{s,a}F_{s,a}\left(G_{s'}-P_{s'\vert s,a}\right)$.
            Next, we calculate the variance of $X_{1}^{s'}$ as
            \begin{align*}
                \mathbb{E}\left[\left(X_{1}^{s'}\right)^{2}\right] & =\mathbb{E}\left[\left(\sum_{s,a}F_{s,a}\left(G_{s'}-P_{s'\vert s,a}\right)\right)^{2}\right]\\
                 & \overset{\text{(i)}}{=}\sum_{s,a}\mathbb{E}\left[F_{s,a}^{2}\left(G_{s'}-P_{s'\vert s,a}\right)^{2}\right]\\
                 & =\sum_{s,a}d_{t}^{\pi^{\beta}}(s,a)\left[\mu_{t+1}^{\pi^{\beta}}(s')\left(1-P_{s'\vert s,a}\right)^{2}+\left(1-\mu_{t+1}^{\pi^{\beta}}(s')\right)P_{s'\vert s,a}^{2}\right]\\
                 & =\sum_{s,a}d_{t}^{\pi^{\beta}}(s,a)\left[\mu_{t+1}^{\pi^{\beta}}(s')-2\mu_{t+1}^{\pi^{\beta}}(s')P_{s'\vert s,a}+P_{s'\vert s,a}^{2}\right]\\
                 & \overset{\text{(ii)}}{\le}\mu_{t+1}^{\pi^{\beta}}(s')-2\left[\mu_{t+1}^{\pi^{\beta}}(s')\right]^{2}+\mu_{t+1}^{\pi^{\beta}}(s')\\
                 & =2\mu_{t+1}^{\pi^{\beta}}(s')\left[1-\mu_{t+1}^{\pi^{\beta}}(s')\right]\le2\mu_{t+1}^{\pi^{\beta}}(s'),
            \end{align*}
            where step~(i) ignores all cross terms since $F_{s,a}F_{\bar{s},\bar{a}}=0$
            for $(s,a)\neq(\bar{s},\bar{a})$; step~(ii) follows from $P_{s'\vert s,a}^{2}\le P_{s'\vert s,a}$.
            Therefore, we can control the sampling error by applying Bernstein's inequality over the sums $\frac1K \sum_{k=1}^K X_k^{s'}$ for all $s'$: 
            \begin{align*}
                \left| \alpha \right| & \le C H \sqrt{\frac{\log( HS/\delta)}{K}}\sum_{s'}\sqrt{\mu_{t+1}^{\pi^{\beta}}(s')}\le C H \sqrt{\frac{S\log( HS/\delta)}{K}},
            \end{align*}
             with probability at least $1-\frac{\delta}{H}$. The last step follows directly from AM-QM inequality. Therefore, we obtain
             \[
                \card{\ip{\rho_t, Z_t - Y_t} } \le C H \sqrt{\frac{S \log ( HS / \delta) }{K}},
             \]
            with probability at least $1-\frac{\delta}{H}$. 
            Applying the union bound over all $t\in[H]$ yields the desired result. 
        \end{proof}

    \subsection{Proof of Corollary~\ref{cor:example}} 
    \label{appen:proof_cor_example}
        For simplicity, define $b \coloneqq \frac{m}{n}$.
        Since the transition is uniform, the state occupancy $\mu_{t}^\pi (\cdot) $ is uniform under
        any policy $\pi$, i.e.~$\mu_t^{\pi}(s) = \frac{1}{n}$.
        By the way the policies are generated, $d_{t}^{\pi^{\theta}}(\cdot,\cdot)=\mu_{t}^{\pi^{\theta}}(\cdot)\pi_{t}^{\theta}(\cdot|\cdot)\in\R^{n \times n}$
        is supported on $mn$ entries whose locations are realization of random
        sampling, and on these entries $d_{t}^{\pi^{\theta}}(s,a)=\frac{1}{mn}.$ Specifically, all $d_t^{\target}(s,a)$ are i.i.d.~Bernoulli random variables that take the value $1$ with probability $b$. The behavior policy $\behavior$ is generated independently via the same process.
        Let $M \coloneqq d_{t}^{\pi^{\theta}}-d_{t}^{\pi^{\beta}}$ and we have
        \begin{equation}
            M_{ij} = 
            \begin{cases}
                \frac{1}{mn} & \text{with probability } b(1-b) \\
                -\frac{1}{mn} & \text{with probability }  b(1-b) \\
                0 & \text{with probability } 1-2b(1-b)
            \end{cases}
        \end{equation}
        independently across all entries $(i,j)$. 
        By matrix Bernstein inequality, we obtain the following result, the proof of which is deferred to Appendix~\ref{appen:proof_matrix_op_norm_bound}.
        \begin{lem} \label{lem:matrix_op_norm_bound}
            There exists an absolute constant $C > 0$ such that when $n\ge C$, with probability at least $1 - \frac{1}{n}$, we have
            \begin{equation} \label{eq:M_opnorm_bound}
                \opnorm{M} \le C  \sqrt{\frac{\log n }{n^2 m}}.
            \end{equation}
        \end{lem}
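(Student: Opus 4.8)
The plan is to apply the matrix Bernstein inequality to the zero-mean random matrix $M\in\R^{n\times n}$, whose entries are i.i.d. The target rate $\sqrt{\log n/(n^2 m)}$ is precisely the sub-Gaussian (variance-dominated) tail predicted by matrix Bernstein, so the bulk of the work is computing the two Bernstein parameters and then checking that the variance term dominates the Bernstein correction.

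First I would write $M = \sum_{i,j\in[n]} X_{ij}$ with $X_{ij} = M_{ij}\, e_i e_j^\top$, a sum of independent, mean-zero summands (mean zero since $M_{ij}$ is symmetric about $0$). Each summand is almost surely bounded: $\opnorm{X_{ij}} = |M_{ij}| \le \tfrac{1}{mn} =: L$. For the variance proxy, note $X_{ij}X_{ij}^\top = M_{ij}^2\, e_i e_i^\top$ and $\E[M_{ij}^2] = \tfrac{2b(1-b)}{m^2 n^2} \le \tfrac{2}{mn^3}$ (using $b=m/n$). Summing over $j$ and then $i$ gives $\sum_{i,j}\E[X_{ij}X_{ij}^\top] = n\,\E[M_{ij}^2]\, I_n \preceq \tfrac{2}{mn^2} I_n$, and by the symmetry of the square matrix the same bound holds for $\sum_{i,j}\E[X_{ij}^\top X_{ij}]$. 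Hence the matrix variance parameter satisfies $\sigma^2 \le \tfrac{2}{mn^2}$.

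Then I would apply matrix Bernstein, $\P(\opnorm{M} \ge t) \le 2n\exp\!\big(\tfrac{-t^2/2}{\sigma^2 + Lt/3}\big)$. Setting $t = C\sqrt{\log n/(n^2 m)}$, the variance contribution gives $\tfrac{t^2/2}{\sigma^2} \ge \tfrac{C^2}{4}\log n$, while the Bernstein correction $Lt/3$ is at most $\tfrac{C}{6}\sqrt{\log n/m}$ times the variance bound $\tfrac{2}{mn^2}$. This last comparison is where the only real subtlety lies: in the relevant regime $m \gtrsim \log n$ the correction is $O(C)$ relative to $\sigma^2$, so the denominator is $\Theta(\sigma^2)$ and the exponent is $\gtrsim C^2\log n$; choosing $C$ a large enough absolute constant then makes $2n\exp(-\Omega(C^2\log n)) \le n^{-1}$ once $n \ge C$. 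I expect verifying that the variance term dominates the $Lt$ term — equivalently $m \gtrsim \log n$ — to be the main point to check, since outside this regime matrix Bernstein only yields the weaker $Lt$-type bound of order $\tfrac{\log n}{mn}$. The final substitution back into the stated inequality is then routine.
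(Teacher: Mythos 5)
Your proof is essentially identical to the paper's: the same decomposition of $M$ into summands $M_{ij}e_ie_j^\top$, the same parameters $L \le \tfrac{1}{mn}$ and $\sigma^2 \le \tfrac{2}{mn^2}$, and the same application of matrix Bernstein with $t = C\sqrt{\log n/(n^2m)}$. The subtlety you flag --- that the variance term dominates the $Lt/3$ correction only when $m \gtrsim \log n$, so that plain matrix Bernstein otherwise yields only the weaker rate $\tfrac{\log n}{mn}$ --- is genuine, but the paper's own proof is subject to exactly the same caveat and silently ignores it (it simply sets the tail bound equal to $1/n$ and declares the result), so this is a limitation of the paper's argument rather than a gap unique to yours.
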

        Combining Theorem~\ref{thm:dt_bound}, (\ref{eq:M_opnorm_bound}) and the union bound over all $t\in[H]$, with probability at least $1-\frac1n$, we get
        \begin{align*}
            \left|\widehat{J}-J^{\pi^{\theta}}\right| &\le 2 n H \sqrt{d} \sum_{t=1}^H \opnorm{d_t^{\behavior} - d_t^{\target}} \\
            & \lesssim H \cdot nH \sqrt{d} \cdot \sqrt{\frac{\log(nH)}{n^2  m}} \\
            & = H^2 \sqrt{\frac{d  \log(nH) }{m }},
        \end{align*}
        where the first upper bound is obtained by plugging  $d_t^{\behavior}$ into the objective of~(\ref{eq:defn_dis_pq}).

    \subsection{Proof of Lemma~\ref{lem:matrix_op_norm_bound}} \label{appen:proof_matrix_op_norm_bound}
        We apply matrix Berstein's inequality (Theorem~6.1.1 in~\cite{tropp2015matrix}). 
        Let $S_k = M_{ij} e_i e_j^\top$, for all $k \in [n^2]$. 
        Since $\card{M_{ij}} \le \frac{1}{mn}$, we derive that $\opnorm{S_k} \le \frac{1}{mn}$. 
        We calculate that
        \begin{align*}
            \sum_k \mathbb{E} [S_k S_k^\top] &= \sum_{i,j} \mathbb{E} [M_{ij}^2] e_i e_i^\top \\
            &= 2b(1-b) \frac{1}{m^2 n} I_n.
        \end{align*}
        As a result, we have
        \[
            \opnorm{\sum_k \mathbb{E} [S_k S_k^\top]} = 2b(1-b)\frac{1}{m^2 n}  = \frac{2(n-m)}{n^3 m} \le \frac{2}{n^2m}.
        \]
        By symmetry, we also have $\opnorm{\sum_k \mathbb{E} [S_k^\top S_k]} \le \frac{2}{n^2 m} $. 
        Hence, we get
        \begin{align*}
            \mathbb{P} \left( \opnorm{M}  \ge t \right) & \le 2n \exp \left( \frac{-t^2/2}{\frac{2}{m n^2 } + \frac{t}{3mn} }\right).
        \end{align*}
        Letting the RHS be upper bounded by $\frac{1}{n}$ yields the desired result.

    \subsection{Proof of Corollary~\ref{cor:example_disjoint_support_finite-sample}}
    \label{appen:proof_disjoint_support_finite}
        Since generating $d_t^{\behavior}$ is done independently from the offline data collection, we first condition on $d_t^{\behavior}$ and $d_t^{\target}$. 
d        By Theorem~\ref{thm:dt_bound_sample}, with probability at least $1-\frac{1}{2n}$, we have
        \begin{align*}
            \big| \widehat{J} - J^{\pi^{\theta}} \big|  
            \le &   2 nH \sqrt{d} \sum_{t\in[H]} \opnorm{d_{t}^{\pi_{\theta}} - d_t^{\behavior}} + C H^2 \sqrt{\frac{d n\log( n H)  }{K}}.
        \end{align*}
        Invoking Lemma~\ref{lem:matrix_op_norm_bound}, with probability at least $1-\frac{1}{2n}$, we have $\opnorm{d_{t}^{\pi_{\theta}} - d_t^{\behavior}} \lesssim \sqrt{\frac{\log (nH)}{n ^2 m }}$ for all $t\in[H]$. Applying the union bound, we get
        \begin{align*}
             \big| \widehat{J} - J^{\pi^{\theta}} \big|  
            \lesssim &   H^2 \sqrt{\frac{d \log(nH)}{m} }  + H^2 \sqrt{\frac{d n \log( nH )  }{K}},
        \end{align*}
        with probability at least $1-\frac1n$. 

    \subsection{Proof of Lemma~\ref{lem:matrix_diff_bound}} \label{appen:proof_matrix_diff_bound}
        The proof uses the following result, which holds for any pairs of dual norms. In this paper, we only consider using $\nunorm{\cdot}$ and $\opnorm{\cdot}$.
        \begin{lem}
        \label{lem:p-q}For a real matrix $M\in\R^{m \times n}$ and two weight matrices
        $P,W\in\R^{m \times n}$, we have that 
        \begin{equation*}
             \card{\sum_{i,j}P_{ij}M_{ij}-\sum_{i,j}W_{ij}M_{ij}} \le\nunorm M\opnorm{P-W}.
        \end{equation*}
        \end{lem}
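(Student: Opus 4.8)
The plan is to recognize the left-hand side as a single entrywise (Frobenius) inner product and then invoke the duality between the nuclear norm and the operator norm. Writing $A \coloneqq P - W$, we have $\sum_{i,j}P_{ij}M_{ij}-\sum_{i,j}W_{ij}M_{ij} = \sum_{i,j}(P_{ij}-W_{ij})M_{ij} = \ip{A, M}$, so the claim is exactly the Schatten-$1$/Schatten-$\infty$ H\"older inequality $\card{\ip{A, M}} \le \nunorm{M}\opnorm{A}$. To keep the argument self-contained rather than quoting duality as a black box, I would pass through the singular value decomposition of $M$.

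First I would write $M = U\Sigma V^\top$ with $U, V$ having orthonormal columns and $\Sigma = \diag(\sigma_1,\dots,\sigma_r)$ collecting the singular values, so that $\nunorm{M} = \sum_{i}\sigma_i$. Using the cyclic property of the trace, $\ip{A, M} = \trace(A^\top U\Sigma V^\top) = \trace(\Sigma V^\top A^\top U) = \sum_{i}\sigma_i (U^\top A V)_{ii}$. The key step is then to control each diagonal entry: writing $u_i, v_i$ for the $i$-th columns of $U, V$, we have $(U^\top A V)_{ii} = u_i^\top A v_i$, and since $u_i, v_i$ are unit vectors, the definition of the operator norm gives $\card{u_i^\top A v_i} \le \opnorm{A}\twonorm{u_i}\twonorm{v_i} = \opnorm{A}$. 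Combining this with the triangle inequality and nonnegativity of the $\sigma_i$ yields
\[
    \card{\ip{A, M}} \le \sum_{i}\sigma_i \,\card{(U^\top A V)_{ii}} \le \opnorm{A}\sum_{i}\sigma_i = \opnorm{A}\nunorm{M},
\]
which is the desired inequality after substituting $A = P - W$.

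I do not expect a genuine obstacle, since this is the standard H\"older duality between the nuclear and operator norms; the content is entirely in the SVD manipulation and the diagonal bound. The only point deserving care is the observation that orthonormality of the singular vectors is precisely what makes $\card{u_i^\top A v_i} \le \opnorm{A}$ hold (so that no extra normalization factors appear), together with the trace rearrangement $\trace(A^\top U\Sigma V^\top) = \sum_i \sigma_i (U^\top A V)_{ii}$; everything else is the triangle inequality. As the preceding remark notes, the same argument applies verbatim to any pair of dual norms, but we only need the nuclear/operator instance here.
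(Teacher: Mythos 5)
Your proof is correct and follows essentially the same route as the paper: both rewrite the left-hand side as the trace inner product $\ip{P-W, M}$ and conclude by the nuclear/operator-norm H\"older duality. The only difference is that you prove the duality inequality from scratch via the SVD expansion $\ip{A,M}=\sum_i \sigma_i\, u_i^\top A v_i$ rather than citing H\"older's inequality as a black box, which is a correct and self-contained instantiation of the same idea.
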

        \begin{proof}
        We can rewrite $\sum_{i,j}P_{ij}M_{ij}-\sum_{i,j}W_{ij}M_{ij}$ as
        \[
        \ip{M,P-W},
        \]
        where $\ip{\cdot,\cdot}$ denotes the trace inner product between
        matrices. Applying H\"older's inequality, we obtain
        \begin{equation*}
            \card{\ip{M,P-W}}  \le\nunorm M\opnorm{P-W}.
        \end{equation*}
        \end{proof}
        Substituing $M_{ij}=A_{ij}-B_{ij}$ in Lemma~\ref{lem:p-q}, we
        immediately obtain the desired results in Lemma~\ref{lem:matrix_diff_bound}.
        
        \subsection{Proof of Lemma~\ref{lem:error_decomp}} \label{appen:proof_error_decomp}
        Recall that 
        \begin{align}
        Q_{t}^{\pi^{\theta}}(s,a) & =\left(B_{t}^{\target}Q_{t+1}^{\pi^{\theta}}\right)(s,a),\label{eq:Bellman}\\
        Y_{t}(s,a) & =\left(B_{t}^{\target}\widehat{Q}_{t+1}^{\pi^{\theta}}\right)(s,a).\label{eq:Y_t}
        \end{align}
        
        For each $(s,a)\in\statespace\times\actionspace$, we have 
        \begin{align*}
        & \widehat{Q}_{t}^{\target}(s,a)-Q_{t}^{\pi^{\theta}}(s,a) \\ =&\left(\widehat{Q}_{t}^{\target}(s,a)-Y_{t}(s,a)\right)+\left(Y_{t}(s,a)-Q_{t}^{\pi^{\theta}}(s,a)\right)\\
         =& \left(\widehat{Q}_{t}^{\target}(s,a)-Y_{t}(s,a)\right)+\sum_{s',a'}P_{t}(s'|s,a)\pi_{t+1}^{\theta}(a'|s')\left(\widehat{Q}_{t+1}^{\target}(s',a')-Q_{t+1}^{\pi^{\theta}}(s',a')\right),
        \end{align*}
        where the last step follows from equations (\ref{eq:Y_t}) and (\ref{eq:Bellman}).
        Multiplying both sides by $d_{t}^{\pi^{\theta}}(s,a)$ and summing
        over $(s,a)$, we obtain 
        \begin{align*}
        & \left\langle d_{t}^{\pi^{\theta}},\widehat{Q}_{t}^{\target}-Q_{t}^{\pi^{\theta}}\right\rangle  \\
        = & \left\langle d_{t}^{\pi^{\theta}},\widehat{Q}_{t}^{\target}-Y_{t}\right\rangle +\sum_{s',a'}\underbrace{\sum_{s,a}d_{t}^{\pi^{\theta}}(s,a)P_{t}(s'|s,a)\pi_{t+1}^{\theta}(a'|s')}_{=d_{t+1}^{\pi^{\theta}}(s',a')}\left(\widehat{Q}_{t+1}^{\target}(s',a')-Q_{t+1}^{\pi^{\theta}}(s',a')\right)\\
         = &\left\langle d_{t}^{\pi^{\theta}},\widehat{Q}_{t}^{\target}-Y_{t}\right\rangle +\left\langle d_{t+1}^{\pi^{\theta}},\widehat{Q}_{t+1}^{\target}-Q_{t+1}^{\pi^{\theta}}\right\rangle ,
        \end{align*}
        thereby proving the first equation in the lemma. Continuing the above
        recursion yields the second equation.

    \subsection{Proof of Lemma~\ref{lem:distritbuion_difference}}
    \label{appen:proof_distirbution_difference}

        The state marginal distribution at time $t$ for an arbitrary policy $\pi$ satisfies $\mu_t^{\pi}(s) = \sum_{s',a'} d_{t-1}^{\pi}(s',a') P_{t-1}(s \vert s',a')$. Accordingly, the state-action occupancy measure satisfies $d_t^{\pi}(s,a) = \sum_{s} \mu_t^{\pi}(s)  \pi_t(a \vert s)$. Equivalently, we write $d^{\pi}_t = ( \mu_t^{\pi} \mathbf{1}^\top ) \circ \pi $, when we view $d^{\pi}_t$ as a $S$-by-$A$ matrix. 
    
        Fix $t\ge 2$, we have
        \begin{align}
            \opnorm{d_t^{\target} - d_t^{\behavior}} & = \opnorm{ (\mu_{t}^{\target} \mathbf{1}^\top) \circ \target_t - (\mu_{t}^{\behavior} \mathbf{1}^\top) \circ \behavior_t } \nonumber \\
            & \le \opnorm{ (\mu_{t}^{\target} \mathbf{1}^\top) \circ \target_t   - (\mu_{t}^{\behavior} \mathbf{1}^\top) \circ \target_t  } + \opnorm{(\mu_{t}^{\behavior} \mathbf{1}^\top) \circ \target_t - (\mu_{t}^{\behavior} \mathbf{1}^\top) \circ \behavior_t} \nonumber \\
            & \le  \sqrt{dS^2 A}
            \opnorm{d_{t-1}^{\behavior} - d_{t-1}^{\target}} + \opnorm{\target_t - \behavior_t}, \label{eq:bound_dt_dt-1}
        \end{align}
        where the first inequality uses the triangle inequality and the second one uses Lemma~\ref{lem:distribution_difference_1} and~\ref{lem:distribution_difference_2}. We take the last display inequality~(\ref{eq:bound_dt_dt-1}) as it is and obtain:
        \begin{align*}
            \opnorm{d_t^{\target} - d_t^{\behavior}} \le \sum_{i=1}^t \left(\sqrt{dS^2A }\right)^{t-i} \opnorm{\target_i - \behavior_i},
        \end{align*}
        for all $t \in [H].$
        Finally, we prove the two lemmas used before. 
        \begin{lem}
        \label{lem:distribution_difference_1}
            For all $t \ge 2$, we have
            \[
                \opnorm{ (\mu_{t}^{\target} \mathbf{1}^\top) \circ \target_t   - (\mu_{t}^{\behavior} \mathbf{1}^\top) \circ \target_t  } \le \sqrt{dS^2 A} \opnorm{d_{t-1}^{\behavior} - d_{t-1}^{\target}}.
            \]
        \end{lem}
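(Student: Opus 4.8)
The plan is to rewrite the left-hand side so that the policy $\target_t$ contributes only harmless row scalings, and then to transfer the whole difference onto the occupancy measures at time $t-1$ using the low-rank structure of the transition kernel. Write $v \coloneqq \mu_t^{\target} - \mu_t^{\behavior} \in \R^{\statesize}$; pulling out the common factor $\target_t$, the matrix inside the norm is $(v\mathbf{1}^\top)\circ\target_t$, whose $(s,a)$ entry equals $v(s)\,\target_t(a\vert s)$.

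First I would bound $\opnorm{(v\mathbf{1}^\top)\circ\target_t} \le \twonorm{v}$. For any unit vector $x \in \R^{\actionsize}$, the $s$-th coordinate of $\big((v\mathbf{1}^\top)\circ\target_t\big)x$ is $v(s)\,\ip{\target_t(\cdot\vert s), x}$, and since $\target_t(\cdot\vert s)$ is a probability distribution we have $\twonorm{\target_t(\cdot\vert s)} \le 1$, so $\card{\ip{\target_t(\cdot\vert s),x}} \le 1$ by Cauchy--Schwarz. Summing squares over $s$ gives $\twonorm{\big((v\mathbf{1}^\top)\circ\target_t\big)x}^2 \le \sum_s v(s)^2 = \twonorm{v}^2$, which yields the claimed operator-norm bound.

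Next I would control $\twonorm{v}$ in terms of $\opnorm{D}$, where $D \coloneqq d_{t-1}^{\target} - d_{t-1}^{\behavior}$. Using the recursion $\mu_t^{\pi}(s) = \sum_{s',a'} d_{t-1}^{\pi}(s',a') P_{t-1}(s\vert s',a')$ recalled at the start of the proof of Lemma~\ref{lem:distritbuion_difference}, each coordinate satisfies $v(s) = \ip{D, P_{t-1}(s\vert\cdot,\cdot)}$, where $P_{t-1}(s\vert\cdot,\cdot)$ is the slice of the transition kernel viewed as an $\statesize\times\actionsize$ matrix indexed by the originating pair $(s',a')$. Hölder's inequality for the dual pair $(\opnorm{\cdot},\nunorm{\cdot})$ then gives $\card{v(s)} \le \opnorm{D}\,\nunorm{P_{t-1}(s\vert\cdot,\cdot)}$.

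The key step is to bound each slice's nuclear norm via the low-rank assumption. Under either branch of the assumption the slice $P_{t-1}(s\vert\cdot,\cdot)$ factorizes between $s'$ and $a'$ as a sum of $d' = \lfloor d/2\rfloor$ rank-one matrices, so $\rank\big(P_{t-1}(s\vert\cdot,\cdot)\big) \le d$; combined with $\fnorm{P_{t-1}(s\vert\cdot,\cdot)}^2 \le \sum_{s',a'} P_{t-1}(s\vert s',a') \le \statesize\actionsize$ (each entry lies in $[0,1]$), this gives $\nunorm{P_{t-1}(s\vert\cdot,\cdot)} \le \sqrt{d}\,\fnorm{P_{t-1}(s\vert\cdot,\cdot)} \le \sqrt{d\statesize\actionsize}$. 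Substituting back and summing $v(s)^2$ over the $\statesize$ states yields $\twonorm{v}^2 \le \statesize \cdot d\statesize\actionsize \cdot \opnorm{D}^2$, hence $\twonorm{v} \le \sqrt{d\statesize^2\actionsize}\,\opnorm{D}$; chaining with the first bound proves the lemma. I expect the main obstacle to be verifying that the transposed transition slice is genuinely rank-$d$ under both cases of the assumption and keeping careful track of the dimension factors, whereas the Frobenius and Cauchy--Schwarz steps are routine.
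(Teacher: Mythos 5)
Your proof is correct, and its second half coincides exactly with the paper's argument: you write $v(s)=\mu_t^{\target}(s)-\mu_t^{\behavior}(s)=\ip{d_{t-1}^{\target}-d_{t-1}^{\behavior},\,P_{t-1}(s\vert\cdot,\cdot)}$, apply H\"older's inequality for the dual pair $(\opnorm{\cdot},\nunorm{\cdot})$, and control each slice via $\nunorm{M}\le\sqrt{\rank(M)}\,\fnorm{M}$ together with $\fnorm{P_{t-1}(s\vert\cdot,\cdot)}\le\sqrt{\statesize\actionsize}$, which is precisely what the paper does. Where you genuinely depart is the treatment of the Hadamard product with the policy. The paper bounds $\opnorm{(v\mathbf{1}^\top)\circ\target_t}\le\sqrt{2}\,\norm{v\mathbf{1}^\top}_\infty\opnorm{\target_t}\le\sqrt{2S}\max_s\card{v(s)}$, which implicitly invokes the Schur-multiplier inequality $\opnorm{A\circ B}\le\sqrt{\rank(A)}\,\norm{A}_\infty\opnorm{B}$ (a max-norm fact the paper never proves) plus $\opnorm{\target_t}\le\sqrt{S}$ for a row-stochastic matrix; you instead prove directly, by row-wise Cauchy--Schwarz and $\twonorm{\target_t(\cdot\vert s)}\le 1$, that $\opnorm{(v\mathbf{1}^\top)\circ\target_t}\le\twonorm{v}$. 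Your step is both more elementary (no Hadamard-product or max-norm machinery, no rank bookkeeping for $v\mathbf{1}^\top$) and tighter, since $\twonorm{v}\le\sqrt{S}\max_s\card{v(s)}$. You then give this gain back by bounding the rank of the transition slice by $d$, whereas the assumption yields $\lfloor d/2\rfloor$ (a sum of $d'$ rank-one terms under either branch of the factorization), so both proofs land on the same constant $\sqrt{dS^2A}$; had you kept the tight rank bound, your route would in fact deliver the slightly sharper constant $\sqrt{dS^2A/2}$.
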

        \begin{proof}
            The matrix $\mu_{t}^{\target} \mathbf{1}^\top - \mu_{t}^{\behavior} \mathbf{1}^\top$ is at most rank $2$. Hence, we have
            \begin{align*}
                \opnorm{ (\mu_{t}^{\target} \mathbf{1}^\top) \circ \target_t   - (\mu_{t}^{\behavior} \mathbf{1}^\top) \circ \target_t  } & = \opnorm{ (\mu_{t}^{\target} \mathbf{1}^\top - \mu_{t}^{\behavior} \mathbf{1}^\top) \circ \target_t } \\
                & \le \sqrt 2 \norm{\mu_{t}^{\target} \mathbf{1}^\top - \mu_{t}^{\behavior} \mathbf{1}^\top}_\infty \opnorm{\target_t} \\
                & \le  \sqrt {2 S}  \max_{s} \card{ \mu_{t}^{\target}(s) - \mu_{t}^{\behavior}(s) },
            \end{align*}
            where in the last line we use the fact that $\opnorm{\target_t} \le \sqrt{S}$ since $\target_t$ is a right stochastic matrix (i.e.,\ the sum of each row is $1$). 
            For each state $s$, we have
            \begin{align*}
                \mu_t^{\target}(s) - \mu_t^{\behavior}(s) & =   \sum_{s',a'} \left( d_{t-1}^{\target}(s',a') - d_{t-1}^{\behavior} (s',a') \right) P_{t-1} (s \vert s',a') \\
                & \le \opnorm{d_{t-1}^{\target} - d_{t-1}^{\behavior}} \nunorm{P_{t-1}(s \vert \cdot, \cdot)},
            \end{align*}
            where the last step invokes H\"older's inequality and the notation $P_{t-1}(s \vert \cdot, \cdot)$ denotes a $\statesize \times \actionsize$ matrix by fixing the next state in the transition probability. Note that by assumption, $P_{t-1}(s \vert \cdot, \cdot)$ is at most rank $d/2$. Hence, we further deduce that
            \[
                \nunorm{P_{t-1}(s \vert \cdot, \cdot)} \le \sqrt{\frac{d \statesize \actionsize }{2}}. 
            \]
            Combining pieces, we obtain the desired result. 
        \end{proof}
        \begin{lem}
        \label{lem:distribution_difference_2}
            For all $t$, we have
            \[
                \opnorm{(\mu_{t}^{\behavior} \mathbf{1}^\top) \circ \target_t - (\mu_{t}^{\behavior} \mathbf{1}^\top) \circ \behavior_t} \le  \opnorm{\target_t - \behavior_t}. 
            \]
        \end{lem}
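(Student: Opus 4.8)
The plan is to recognize that the Hadamard product with the rank-one matrix $\mu_{t}^{\behavior}\mathbf{1}^\top$ is nothing but a row rescaling, which turns the claim into an elementary submultiplicativity estimate. Concretely, for any matrix $M\in\R^{\statesize\times\actionsize}$ I would first observe the identity $(\mu_{t}^{\behavior}\mathbf{1}^\top)\circ M=\diag(\mu_{t}^{\behavior})\,M$, verified entrywise: the $(s,a)$ entry of either side equals $\mu_{t}^{\behavior}(s)\,M(s,a)$. Applying this with $M=\target_t-\behavior_t$ rewrites the left-hand side of the lemma as $\opnorm{\diag(\mu_{t}^{\behavior})(\target_t-\behavior_t)}$.

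From there I would invoke submultiplicativity of the operator norm to bound $\opnorm{\diag(\mu_{t}^{\behavior})(\target_t-\behavior_t)}\le\opnorm{\diag(\mu_{t}^{\behavior})}\,\opnorm{\target_t-\behavior_t}$, and then note that $\mu_{t}^{\behavior}$ is a state occupancy measure, so its entries lie in $[0,1]$. Since the operator norm of a diagonal matrix equals the largest magnitude of its diagonal entries, we get $\opnorm{\diag(\mu_{t}^{\behavior})}=\max_{s}\mu_{t}^{\behavior}(s)\le 1$, and combining the two bounds yields the claimed inequality. There is essentially no obstacle here: the only real step is spotting that multiplying entrywise by a column-constant matrix is exactly left multiplication by $\diag(\mu_{t}^{\behavior})$, after which the result is immediate. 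It is worth contrasting this with Lemma~\ref{lem:distribution_difference_1}, where the Hadamard factor is the \emph{difference} of two rank-one matrices rather than a single one; that factor is not a clean diagonal rescaling, which is precisely why the dimension-dependent constant $\sqrt{dS^2A}$ appears there but not in the present lemma.
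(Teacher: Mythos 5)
Your proof is correct and follows essentially the same route as the paper: the paper's one-line bound $\opnorm{(\mu_{t}^{\behavior}\mathbf{1}^\top)\circ(\target_t-\behavior_t)}\le\norm{\mu_t^{\behavior}}_{\infty}\opnorm{\target_t-\behavior_t}$ rests on exactly the rank-one Hadamard-product fact whose mechanism you spell out, namely $(\mu_{t}^{\behavior}\mathbf{1}^\top)\circ M=\diag(\mu_{t}^{\behavior})M$ followed by submultiplicativity and $\max_s\mu_t^{\behavior}(s)\le 1$. Your version simply makes explicit the diagonal-rescaling identity that the paper leaves implicit.
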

        \begin{proof}
            We use the fact that the matrix $\mu_{t}^{\behavior} \mathbf{1}^\top$ is rank $1$ to deduce that
            \begin{align*}
                \opnorm{(\mu_{t}^{\behavior} \mathbf{1}^\top) \circ \target_t - (\mu_{t}^{\behavior} \mathbf{1}^\top) \circ \behavior_t}& = \opnorm{( \mu_{t}^{\behavior} \mathbf{1}^\top ) \circ (\target_t - \behavior_t)} \\
                & \le \norm{\mu_t^{\behavior}}_{\infty} \opnorm{\target_t - \behavior_t} \le \opnorm{\target_t - \behavior_t}.
            \end{align*}
        \end{proof}

    \subsection{Proof of Theorem~\ref{thm:policy_opt}}
    \label{appen:proof_policy_opt}
        We first apply Theorem~\ref{thm:dt_bound_sample}. For $\pihat$ and an arbitrary policy $\pi \in \widetilde{\Pi}_{\B}$, with probability at least $1-\delta$, we have
        \begin{align*}
            \big| \widehat{J} - J^{\pihat} \big|  
                        & \le 
                        2 H \sqrt{dSA} \sum_{t\in[H]} B_t    + C H^2 \sqrt{\frac{d (S+A) \log( \vert \widetilde{\Pi}_{\B} \vert HS/\delta) }{K}} \\
            \big| \widehat{J} - J^{\pi} \big|  
                        & \le 
                        2  H \sqrt{dSA} \sum_{t\in[H]} B_t   + C H^2 \sqrt{\frac{d (S+A) \log( \vert \widetilde{\Pi}_{\B} \vert  
        HS/\delta) }{K}}. \\
        \end{align*}
        Then, we deduce that
        \begin{align*}
            J^{\pihat} &\ge \widehat{J}^{\pihat}  - 2 H \sqrt{dSA} \sum_{t\in[H]} B_t   - C H^2 \sqrt{\frac{d (S+A) \log( \vert \widetilde{\Pi}_{\B} \vert HS/\delta) }{K}}  \\
            &\ge  \widehat{J}^{\pi}- 2  H   \sqrt{dSA}  \sum_{t\in[H]} B_t  - C H^2 \sqrt{\frac{d (S+A) \log(\vert \widetilde{\Pi}_{\B} \vert HS/\delta) }{K}} \\
            &\ge J^{\pi} - 4  H  \sqrt{dSA}  \sum_{t\in[H]} B_t     - 2 C H^2 \sqrt{\frac{d (S+A) \log( \vert \widetilde{\Pi}_{\B} \vert HS/\delta) }{K}}.
        \end{align*}

\end{document}